\def\TITLE{%
  Parameterized Runtime Analyses of Evolutionary Algorithms for the
  Euclidean Traveling Salesperson Problem 
}
\title{\TITLE}
\author{%
  Andrew M. Sutton and Frank Neumann\\
  ~\\
  School of Computer Science,
  University of Adelaide\\ 
  Adelaide, SA 5005, Australia}
\date{}
\def\Na{\mathds{N}}
\def\NP{\mathsf{NP}}
\def\XP{\mathsf{XP}}
\def\FPT{\mathsf{FPT}}
\def\W1{\mathsf{W}[1]}
\newcommand{\hull}[1]{\mathfrak{H}\!\left({#1}\right)}
\newcommand{\inv}[2]{\sigma^{\rm I}_{#1#2}}  
\newcommand{\jmp}[2]{\sigma^{\rm J}_{#1#2}}  
\def\E{\mathbb{E}}
\def\e{e}
\def\opt{\ensuremath{opt}}
\def\fn{\ensuremath{\gamma}}
\newtheorem{theorem}{Theorem}
\newtheorem{lemma}{Lemma}
\newtheorem{proposition}{Proposition}
\newtheorem{definition}{Definition}
\newtheorem*{corollary}{Corollary}
\begin{document}
\maketitle
\begin{abstract}
  Parameterized runtime analysis seeks to understand the influence of
  problem structure on algorithmic runtime. In this paper, we
  contribute to the theoretical understanding of evolutionary
  algorithms and carry out a parameterized analysis of evolutionary
  algorithms for the Euclidean traveling salesperson problem
  (Euclidean TSP).

  We investigate the structural properties in TSP instances that
  influence the optimization process of evolutionary algorithms and
  use this information to bound the runtime of simple evolutionary
  algorithms.  Our analysis studies the runtime in dependence of the
  number of inner points $k$ and shows that $(\mu + \lambda)$
  evolutionary algorithms solve the Euclidean TSP in expected time
  $O((\mu/\lambda) \cdot n^3\fn(\epsilon) + n\fn(\epsilon) +
  (\mu/\lambda) \cdot n^{4k}(2k-1)!)$ where $\fn$ is a function of
  the minimum angle $\epsilon$ between any three points. 

  Finally, our analysis provides insights into designing a mutation
  operator that improves the upper bound on expected runtime. We show
  that a mixed mutation strategy that incorporates both 2-opt moves
  and permutation jumps results in an upper bound of $O((\mu/\lambda)
  \cdot n^3\fn(\epsilon) + n\fn(\epsilon) + (\mu/\lambda) \cdot
  n^{2k}(k-1)!)$ for the $(\mu+\lambda)$~EA.
\end{abstract}

\section{Introduction}\label{sec:introduction}
In many real applications, the inputs of an $\NP$-hard combinatorial
optimization problem may be structured or restricted in such a way
that it becomes tractable to solve in practice despite having a
worst-case exponential time bound. Parameterized analysis seeks to
address this by expressing algorithmic runtime in terms of an
additional hardness parameter that isolates the source of exponential
complexity in the problem structure. In this paper, we study the
application of evolutionary algorithms (EAs) to the Euclidean
Traveling Salesperson Problem (TSP) and consider the runtime of such
algorithms as a function of both problem size and a further parameter
that influences how hard the problem is to solve by an EA.

\subsection{The Euclidean traveling salesperson problem}
Iterative heuristic methods (such as local search and evolutionary
algorithms) that rely on the exchange of a few edges such as the
well-known 2-opt (or 2-change) operator are popular choices for
solving large scale TSP instances in practice. This is partly due to
the fact that they have a simple implementation and typically perform
well empirically.  However, for these algorithms, theoretical
understanding still remains limited. Worst-case analyses demonstrate
the existence of instances on which the procedure can be
inefficient. Chandra, Karloff, and Tovey~\cite{Chandra1999new},
building from unpublished results due to Lueker, have shown that local
search algorithms employing a $k$-change neighborhood operator can
take exponential time to find a locally optimal solution. Even in the
Euclidean case, Englert, R\"oglin, and
V\"ocking~\cite{Englert2007worst} have recently shown that a local
search algorithm employing inversions can take worst-case exponential
time to find tours which are locally optimum.

If the search operator is restricted to specialized 2-opt moves that
remove only edges that intersect in the plane, van Leeuwen and
Schoone~\cite{vanLeeuwen1981untangling} proved that a tour that has no
such planar intersections can be reached in $O(n^3)$ moves, even if a
move introduces further intersecting edges. Since determining which
edges are intersecting can take quadratic time, a locally optimal tour
can be found in time $O(n^5)$.  We point out that a local optimum in
this restricted neighborhood does not necessarily correspond to a
local optimum in the general 2-opt neighborhood.

If the vertices are distributed uniformly at random in the unit
square, Chandra, Karloff, and Tovey~\cite{Chandra1999new} showed that
the expected time to find a locally optimal solution is bounded by
$O(n^{10} \log n)$. More generally, for so-called $\phi$-perturbed
Euclidean instances, Englert, R\"oglin, and
V\"ocking~\cite{Englert2007worst} proved that the expected time to
find a locally optimum solution is bounded by $O(n^{4 + 1/3}
\log(n\phi) \phi^{8/3})$. These results also imply similar bounds for
simple ant colony optimization algorithms as shown in
\cite{AntsTsp12}.

To allow for a deeper insight into the relationship between problem
instance structure and algorithmic runtime, we appeal in this paper to
the theory of parameterized complexity~\cite{RodFell1999}.  Rather
than expressing the runtime solely as a function of problem size,
parameterized analysis decomposes the runtime into further parameters
that are related to the structure of instances. The idea is to find
parameters that partition off the combinatorial explosion that leads
to exponential runtimes~\cite{DowneyFellows1999after}.

In the context of TSP, a number of parameterized results currently
exist.  De{\u{\i}}neko et al.~\cite{Deineko2006inner} showed that, if
a Euclidean TSP instance with $n$ vertices has $k$ vertices interior
to the convex hull, there is a dynamic programming algorithm that can
solve the instance in time bounded by $g(k)\cdot n^{O(1)}$ where $g$
is a function that depends only on $k$. This means that this
parameterization belongs to the complexity class $\FPT$, the class of
parameterized problems that are considered fixed-parameter tractable.
Of course, membership in $\FPT$ depends strongly on the
parameterization itself. For example, the problem of searching the
$k$-change neighborhood for metric TSP is hard for $\W1$ due to
Marx~\cite{Marx2008tsp}. Therefore, the latter parameterization is not
likely to belong to $\FPT$.

\subsection{Computational complexity of evolutionary algorithms}
Initial studies on the computational complexity of evolutionary
algorithms consider their runtime on classes of artificial
pseudo-Boolean functions
\cite{DJWoneone,DBLP:journals/ai/HeY01,DBLP:journals/ai/HeY03,DBLP:conf/aaai/YuZ06}. The
goal of these studies is to consider the impact of the different
modules of an evolutionary algorithm and to develop new methods for
their analysis.  This early work was instrumental in establishing a
rigorous understanding of the behavior of evolutionary algorithms on
simple functions, for identifying some classes of problems that simple
EAs can provably solve in expected polynomial time~\cite{DJWoneone},
and for disproving widely accepted conjectures (e.g., that
evolutionary algorithms are always efficient on unimodal
functions~\cite{DBLP:conf/ppsn/DrosteJW98}).

More recently, classical polynomial-time problems from combinatorial
optimization such as minimum spanning
trees~\cite{NeumannWegener2005,NeumannWegenerTCS07} and shortest
paths~\cite{STWsorting,DBLP:conf/cec/DoerrHK07,BBDFKN09} have been
considered.  In this case, one does not hope to beat the best
problem-specific algorithms for classical polynomial solvable
problems. Instead, these studies provide interesting insights into the
search behavior of these algorithms and show that many classical
problems are solved by general-purpose algorithms such as evolutionary
algorithms in expected polynomial time.

Research on $\NP$-hard combinatorial optimization problems such as
makespan scheduling, covering problems, and multi-objective minimum
spanning trees~\cite{Neu07EJOR,WittWorstCaseAverageCase} show that
evolutionary algorithms can achieve good approximations for these
problems in expected polynomial time.  In the case of the TSP,
Theile~\cite{Theile2009exact} has proved that a $(\mu+1)$~EA based on
dynamic programming can exactly solve the TSP in at most $O(n^3 2^n)$
steps when $\mu$ is allowed to be exponential in $n$.  For a
comprehensive presentation of the different results that have been
achieved see, e.g., the recent text of Neumann and Witt
\cite{BookNeuWit}.

Algorithmic runtime on $\NP$-hard problems can be studied in much
sharper detail from the perspective of parameterized analysis, and
this has only recently been started in theoretical work on
evolutionary algorithms. Parameterized results have been obtained for
the vertex cover problem~\cite{KratschNeumannGECCO09}, the problem of
computing a spanning tree with a maximal number of
leaves~\cite{DBLP:conf/ppsn/KratschLNO10}, variants of maximum
2-satisfiability~\cite{Sutton2012max2sat}, and makespan
scheduling~\cite{Sutton2012makespan}.

\subsection{Our results}
In this paper, we carry out a parameterized complexity analysis for
evolutionary algorithms for the Euclidean TSP\@. We prove upper bounds
on the expected runtime of two classical EAs based on 2-opt mutation
in the context of the TSP parameterization of De{\u{\i}}neko et
al.~\cite{Deineko2006inner}, that is, as a function of the number of
points that lie on the interior of the convex hull. Our results are
for the $(\mu+\lambda)$~EA which operates on a population of $\mu$
permutations (candidate Hamiltonian cycles) and produces $\lambda$
offspring in each generation using a mutation operator based on 2-opt.
This analysis provides further insights into the optimization process
that allows us to design a mixed mutation operator that uses both
2-opt moves and permutation jumps and improves the upper bound on
expected runtime of the $(\mu+\lambda)$~EA.

By setting $\mu=\lambda=1$ and changing the mutation operator to
single random 2-opt moves, we also prove parameterized runtime bounds
for randomized local search (RLS): a randomized hill-climber on the
space of permutations. In this case, we present results for the
expected time for RLS to converge to a locally optimal tour in terms
of 2-opt moves. Specialized results for RLS and the $(1+1)$~EA using
2-opt mutation appear in a conference version of this paper presented
at AAAI 2012~\cite{Sutton2012tsp}.

The paper is organized as follows. In Section~\ref{sec:preliminaries}
we introduce the problem, algorithm and analysis. In
Section~\ref{sec:struct-prop} we study structural properties of the
Euclidean TSP\@. In Section~\ref{sec:inst-conv-posit} we study the
runtime of the $(\mu+\lambda)$~EA and RLS on Euclidean TSP instances
whose points lie in convex position, i.e., have no inner points. In
Section~\ref{sec:param-runt-analys} we then prove rigorous runtime
bounds for the algorithms as a function of the number of inner points
in an instance. We conclude the paper in Section~\ref{sec:conclusion}.

\section{Preliminaries}\label{sec:preliminaries}
Let $V$ be a set of $n$ points in the plane labeled as $[n] =
\{1,\ldots,n\}$ such that no three points are collinear. We consider
the complete, weighted Euclidean graph $G = (V,E)$ where $E$ is the
set of all $2$-sets from $V$. The weight of an edge $\{u,v\} \in E$ is
equal to $d(u,v)$: the Euclidean distance separating the points. The
goal is to find a set of $n$ edges of minimum weight that form a
Hamiltonian cycle in $G$. A candidate solution of the TSP is a
permutation $x$ of $V$ which we consider as a sequence of distinct
elements $x = (x_1, x_2, \ldots, x_n), \textrm{ such that } x_i \in
[n]$.  The Hamiltonian cycle in $G$ induced by such a permutation is
the set of $n$ edges
\[
C(x) = \left\{ \{x_1, x_2\}, \{x_2, x_3\}, \ldots, \{x_{n-1}, x_n\},
  \{x_n, x_1\} \right\}.
\]
The optimization problem is to find a permutation $x$ which minimizes
the fitness function
\begin{equation}
  \label{eq:fitness}
  f(x) = \sum_{\{u,v\} \in C(x)} d(u,v).
\end{equation}

Geometrically, it will often be convenient to consider an edge
$\{u,v\}$ as the unique planar line segment with end points $u$ and
$v$.  We say a pair of edges $\{u,v\}$ and $\{s,t\}$ \emph{intersect}
if they cross at a point in the Euclidean plane. An important
observation, which we state here without proof, is that any pair of
intersecting edges form the diagonals of a convex quadrilateral in the
plane (see Figure~\ref{fig:intersection}).

\begin{proposition}
\label{prp:intersection}
If $\{u,v\}$ and $\{s,t\}$ intersect at a point $p$, they form the
diagonals of a convex quadrilateral described by points $u$, $s$, $v$,
and $t$. Hence edges $\{s,u\}$, $\{s,v\}$, $\{t,v\}$ and $\{t,u\}$
form a set of edges that mutually do not intersect.
\end{proposition}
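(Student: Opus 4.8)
The plan is to reduce the proposition to a statement about the convex hull of the four points $u,v,s,t$ (these are necessarily distinct, since two edges sharing a vertex cannot cross). Because $\{u,v\}$ and $\{s,t\}$ cross at $p$, the point $p$ lies strictly in the relative interior of each segment; and since no three of the $n$ points are collinear, neither $s$ nor $t$ lies on the line through $u,v$, and neither $u$ nor $v$ lies on the line through $s,t$. I will show that the convex hull of $\{u,v,s,t\}$ is a quadrilateral having all four points as vertices, and that their cyclic order around it is $u,s,v,t$, so that $\{u,v\}$ and $\{s,t\}$ are its two diagonals. Granting this, the last assertion is immediate: $\{s,u\}$, $\{s,v\}$, $\{t,v\}$, $\{t,u\}$ are then exactly the four sides of a convex quadrilateral.

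The first step is to rule out that one of the four points lies inside the triangle spanned by the other three. Suppose some point $X \in \{u,v,s,t\}$ were interior to the triangle $T$ on the remaining three points. Exactly one of the edges $\{u,v\}$, $\{s,t\}$ has $X$ as an endpoint; that edge joins the interior point $X$ to a vertex of $T$, so by convexity every one of its points other than that vertex lies in the open interior of $T$, whereas the other edge is precisely the side of $T$ opposite that vertex. The two edges would therefore be disjoint --- using, once more, that a vertex of a triangle does not lie on the opposite side --- contradicting that they cross at $p$. Hence all four points are vertices of their convex hull, which is a convex quadrilateral $Q$.

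The second step fixes the cyclic order. Up to rotation and reflection, the labels $u,v,s,t$ can sit around $Q$ in only three ways; in two of them the pair $\{u,v\}$, $\{s,t\}$ forms two opposite sides of $Q$, and in the remaining one it forms the two diagonals. Since the boundary of a convex polygon is a simple closed curve, two non-adjacent sides are disjoint, so the first two arrangements are incompatible with $\{u,v\}$ and $\{s,t\}$ crossing; thus the order is $u,s,v,t$ and these edges are the diagonals. Consequently $\{s,u\}$, $\{s,v\}$, $\{t,v\}$, $\{t,u\}$ are the four sides of $Q$, and distinct sides of a convex quadrilateral never cross: adjacent sides meet only at their common vertex (and cannot overlap, as that would place three points on a line), while non-adjacent sides are disjoint by simplicity of the boundary.

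The ingredients I am leaning on --- that the open segment from a triangle vertex to an interior point avoids the opposite side, and that non-adjacent edges of a convex polygon are disjoint --- are all elementary consequences of taking convex combinations, so I expect the main effort to be stating them cleanly and handling the small finite case check on cyclic orders rather than any real difficulty. If that case check feels unsatisfying, an alternative is to place the line through $u,v$ on the $x$-axis and read the signs of the orientation determinants of $(u,v,s)$, $(u,v,t)$, $(s,t,u)$, and $(s,t,v)$ directly off the facts that $p$ separates $s$ from $t$ and $u$ from $v$; those signs force the four turns of the closed walk $u \to s \to v \to t \to u$ to agree in orientation, which is exactly convexity of the quadrilateral.
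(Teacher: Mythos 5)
Your proof is correct, but note that the paper offers no proof to compare against: Proposition~\ref{prp:intersection} is explicitly introduced as ``an important observation, which we state here without proof,'' so your write-up is supplying an argument the authors chose to omit. The argument itself is sound. The key step is establishing convex position of $u,v,s,t$: you rule out one point lying inside the triangle of the other three by observing that the crossing edge incident to that point would run from an interior point to a vertex, hence (apart from that vertex) stay in the open interior of the triangle, while the other crossing edge is the opposite side on the boundary --- so the two edges could not meet, contradicting the crossing at $p$. Together with the paper's standing assumption that no three points are collinear (which you correctly invoke to exclude a point on the boundary of the triangle, to guarantee $p$ is interior to both segments, and to prevent overlapping sides), this forces the hull to be a quadrilateral with all four points as vertices, and the short case analysis on the three possible cyclic orders correctly shows the crossing pair must be the diagonals, making $\{s,u\}$, $\{s,v\}$, $\{t,v\}$, $\{t,u\}$ the sides, which pairwise do not cross. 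Your implicit reading of ``intersect'' as ``cross at a point other than a shared endpoint'' matches the paper's usage (adjacent tour edges share vertices and are not counted as intersecting), so the conclusion is stated in the intended sense; the alternative orientation-determinant argument you sketch would also work, but is not needed.
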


\begin{figure}
  \centering  
  \begin{tikzpicture}[scale=0.6]
    \draw[very thick] (0,8) edge (11,4);
    \draw[very thick] (4,9) edge (8,0);
    \draw[dashed] (0,8) -- (4,9) -- (11,4) -- (8,0) -- cycle;
    \draw[fill=white] (0,8) circle (0.3cm);
    \draw[fill=white] (4,9) circle (0.3cm);
    \draw[fill=white] (11,4) circle (0.3cm);
    \draw[fill=white] (8,0) circle (0.3cm);
    \node[draw=none,above=0.25cm] at (0,8) {$u$};
    \node[draw=none,above=0.25cm] at (4,9) {$s$};
    \node[draw=none,above=0.25cm] at (11,4) {$v$};
    \node[draw=none,below=0.25cm] at (8,0) {$t$};
    \node[draw=none] at (5.5,6.5) {$p$};
  \end{tikzpicture}
  \caption{\label{fig:intersection} Edges that intersect at a point
    $p$ form the diagonals of a convex quadrilateral in the plane.}
\end{figure}
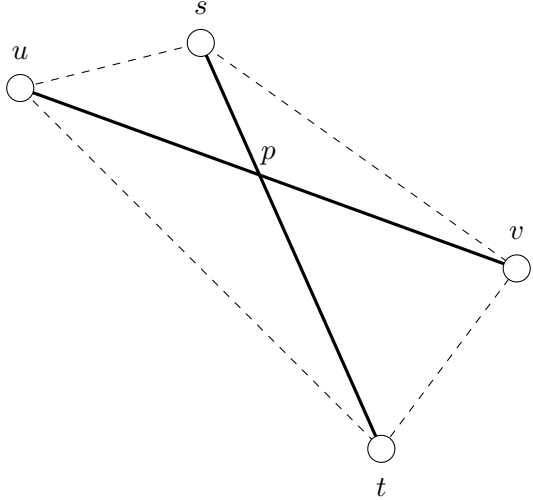

\begin{definition}
  A tour $C(x)$ is called \emph{intersection-free} if it contains no
  pairs of edges that intersect.
\end{definition}

\subsection{Parameterized analysis}\label{sec:param-analys}
Parameterized complexity theory is an extension to traditional
computational complexity theory in which the analysis of hard
algorithmic problems is decomposed into parameters of the problem
input. This approach illuminates the relationship between hardness and
different aspects of problem structure because it often isolates the
source of exponential complexity in $\NP$-hard problems.

A parameterization of a problem is a mapping of problem instances into
the set of natural numbers. We are interested in expressing
algorithmic complexity in terms of both problem size and the extra
parameter. Formally, let $L$ be a language over a finite alphabet
$\Sigma$. A \emph{parameterization} of $L$ is a mapping $\kappa :
\Sigma^* \to \Na$. The corresponding \emph{parameterized problem} is
the pair $(L,\kappa)$.

For a string $x \in \Sigma^*$, let $k = \kappa(x)$ and $n = |x|$. An
algorithm deciding $x \in L$ in time bounded by $g(k) \cdot n^{O(1)}$
is called a \emph{fixed-parameter tractable} (or fpt-) algorithm for
the parameterization $\kappa$. Here $g : \Na \to \Na$ is an arbitrary
but computable function. Similarly, an algorithm that decides a
parameterized problem $(L,\kappa)$ in time bounded by $n^{g(k)}$ is
called an $\XP$-algorithm. 

When working with the runtime of randomized algorithms such as
evolutionary algorithms, one is often interested in the random
variable $T$ which somehow measures the number of steps the algorithm
must take to decide a parameterized problem.  A randomized algorithm
with \emph{expected} optimization time $E(T) \leq g(k) \cdot n^{O(1)}$
(respectively, $E(T) \leq n^{g(k)}$) is a randomized fpt-algorithm
(respectively, $\XP$-algorithm) for the corresponding parameterization
$\kappa$.

In the case of the Euclidean TSP on a set of points $V$, we want to
express the runtime complexity as a function of $n$ and $k$ where $n =
|V|$ and $k$ is the number of vertices that lie in the interior of the
convex hull of $V$. We will hereafter refer to these points as the
\emph{inner points} of $V$. For the corresponding optimization
problem, we are interested in the runtime until the optimal solution
is located.

\subsection{Simple evolutionary algorithms}\label{sec:runt-analys-simple}
Randomized search heuristics such as randomized local search and
evolutionary algorithms that are tasked to solve TSP instances are
usually designed to iteratively search the space of permutations in
order to minimize the fitness function defined in
Equation~(\ref{eq:fitness}).  Each permutation corresponds to a
particular Hamiltonian cycle in the graph.  To move through the space
of candidate solutions, move or mutation operators are often
constructed based on some kind of elementary operations on the set of
permutations on $[n]$.  In this paper, we will consider two such
operations: \emph{inversions} and \emph{jumps} which we define as
follows.

\begin{definition}
  \label{def:inversion}
  The \emph{inversion} operation $\inv{i}{j}$ transforms permutations
  into one another by segment reversal. A permutation $x$ is
  transformed into a permutation $\inv{i}{j}[x]$ by inverting the
  subsequence in $x$ from position $i$ to position $j$ where $1 \leq i
  < j \leq n$.
  \begin{align*}  
    x &= (x_1, \ldots, x_{i-1}, x_i, x_{i+1},\ldots,x_{j-1},x_j,x_{j+1}, \ldots, x_n)\\
    \inv{i}{j}[x] &= (x_1, \ldots, x_{i-1}, x_j,
    x_{j-1},\ldots,x_{i+1},x_i,x_{j+1}, \ldots, x_n)
  \end{align*}
\end{definition}
In the space of Hamiltonian cycles, the permutation inversion
operation is essentially identical to the well-known 2-opt (or
2-change) operation for TSP\@.  The usual effect of the inversion
operation is to delete the two edges $\{x_{i-1},x_{i}\}$ and
$\{x_{j},x_{j+1}\}$ from $C(x)$ and reconnect the tour
$C(\inv{i}{j}[x])$ using edges $\{x_{i-1},x_j\}$ and
$\{x_{i},x_{j+1}\}$ (see Figure~\ref{fig:inversion}).  Here and
subsequently, we consider arithmetic on the indices to be modulo $n$,
i.e., $1-1 = n$ and $n+1 = 1$.  Since the underlying graph $G$ is
undirected, when $(i,j) = (1,n)$, the operation has no effect because
the current tour is only reversed. There is also no effect when $(i,j)
\in \{(2,n),(1,n-1)\}$. In this case, it is straightforward to check
that the edges removed from $C(x)$ are equal to the edges replaced to
create $C(\inv{i}{j}[x])$.

\begin{figure}
  \centering

  \begin{tikzpicture}[scale=0.6]
    \draw[very thick] (1,7) edge (4,8);
    \draw[very thick] (1,1) edge (4,0);
    \draw[very thick] (4,8) edge (7,7);
        \draw[very thick] (4,0) edge (7,1);
        
        \draw[very thick,dashed] (1,1) .. 
        controls (-0.2,1.5) 
        .. (1,2);        
        \draw[very thick,dashed] (0,4) .. 
        controls (-2.5,3) and (2,3) 
        .. (1,2);
        \draw[very thick,dashed] (0,4) .. 
        controls (3,5) and  (-2,6)
        .. (1,7);
        
        \draw[very thick,dashed] (7,1) .. 
        controls (8.2,1.5) 
        .. (7,2);
        
        \draw[very thick,dashed] (8,4) .. 
        controls (10.5,3) and (6,3)
        .. (7,2);
        \draw[very thick,dashed] (8,4) .. 
        controls (5,5) and (10,6)
        .. (7,7);        
        
        \draw[fill=white] (4,0) circle (0.3cm);
        \draw[fill=white] (4,8) circle (0.3cm);
        \draw[fill=white] (0,4) circle (0.3cm);
        \draw[fill=white] (8,4) circle (0.3cm);        
        \draw[fill=white] (1,1) circle (0.3cm);
        \draw[fill=white] (1,7) circle (0.3cm);
        \draw[fill=white] (7,7) circle (0.3cm);
        \draw[fill=white] (7,1) circle (0.3cm);
        
        \node[draw=none,above=0.25cm] at (1,7) {$x_{i-1}$};
        \node[draw=none,above=0.25cm] at (4,8) {$x_{i}$};
        \node[draw=none,above=0.25cm] at (7,7) {$x_{i+1}$};
        
        \node[draw=none,below=0.25cm] at (1,1) {$x_{j+1}$};
        \node[draw=none,below=0.25cm] at (4,0) {$x_{j}$};
        \node[draw=none,below=0.25cm] at (7,1) {$x_{j-1}$};
        
        \node[draw=none,left=0.25cm] at (0,4) {$x_{1}$};
        \node[draw=none,right=0.25cm] at (8,4) {$x_{n}$};
        
      \end{tikzpicture}
      \hfill
       \begin{tikzpicture}[scale=0.6]
         \draw[very thick] (1,7) edge (4,0);
         \draw[very thick] (1,1) edge (4,8);
         \draw[very thick] (4,8) edge (7,7);
         \draw[very thick] (4,0) edge (7,1);
         
         \draw[very thick,dashed] (1,1) .. 
         controls (-0.2,1.5) 
         .. (1,2);        
         \draw[very thick,dashed] (0,4) .. 
         controls (-2.5,3) and (2,3) 
         .. (1,2);
         \draw[very thick,dashed] (0,4) .. 
         controls (3,5) and  (-2,6)
         .. (1,7);
        
         \draw[very thick,dashed] (7,1) .. 
         controls (8.2,1.5) 
         .. (7,2);
         
         \draw[very thick,dashed] (8,4) .. 
         controls (10.5,3) and (6,3)
         .. (7,2);
         \draw[very thick,dashed] (8,4) .. 
         controls (5,5) and (10,6)
         .. (7,7);        
         
         \draw[fill=white] (4,0) circle (0.3cm);
         \draw[fill=white] (4,8) circle (0.3cm);
         \draw[fill=white] (0,4) circle (0.3cm);
         \draw[fill=white] (8,4) circle (0.3cm);        
         \draw[fill=white] (1,1) circle (0.3cm);
         \draw[fill=white] (1,7) circle (0.3cm);
         \draw[fill=white] (7,7) circle (0.3cm);
         \draw[fill=white] (7,1) circle (0.3cm);
         
         \node[draw=none,above=0.25cm] at (1,7) {$x_{i-1}$};
         \node[draw=none,above=0.25cm] at (4,8) {$x_{i}$};
         \node[draw=none,above=0.25cm] at (7,7) {$x_{i+1}$};
         
         \node[draw=none,below=0.25cm] at (1,1) {$x_{j+1}$};
         \node[draw=none,below=0.25cm] at (4,0) {$x_{j}$};
         \node[draw=none,below=0.25cm] at (7,1) {$x_{j-1}$};
         
         \node[draw=none,left=0.25cm] at (0,4) {$x_{1}$};
         \node[draw=none,right=0.25cm] at (8,4) {$x_{n}$};
      
       \end{tikzpicture}
       
  \caption{\label{fig:inversion} The effect of the inversion
    operation on a Hamiltonian cycle.}
\end{figure}
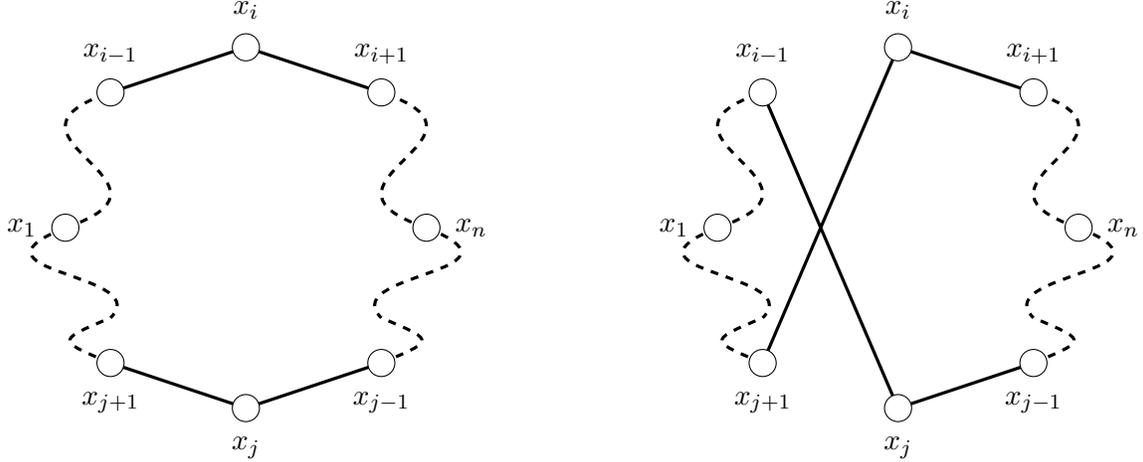

\begin{definition}
  \label{def:jump}
  The \emph{jump} operation $\jmp{i}{j}$ transforms permutations into
  one another by position shifts. A permutation $x$ is transformed
  into a permutation $\jmp{i}{j}[x]$ by moving the element in position
  $i$ into position $j$ while the other elements between position $i$
  and position $j$ are shifted in the appropriate direction.  Without
  loss of generality, suppose $i < j$. Then,
  \begin{align*}  
    x &= (x_1, \ldots, x_{i-1}, x_i, x_{i+1},\ldots,x_{j-1},x_j,x_{j+1}, \ldots, x_n)\\
    \jmp{i}{j}[x] &= 
    (x_1, \ldots, x_{i-1}, x_{i+1},\ldots,x_{j-1}, x_i, x_j,x_{j+1}, \ldots, x_n)\\
    \jmp{j}{i}[x] &= 
    (x_1, \ldots, x_{i-1}, x_j, x_i, x_{i+1},\ldots,x_{j-1},x_{j+1}, \ldots, x_n)
  \end{align*}
\end{definition}
Since $\jmp{i}{(i+1)}$ and $\jmp{(i+1)}{i}$ have the same effect,
there are $n(n-1) - (n-1) = (n-1)^2$ unique jump operations.  The
\emph{jump} operator $\jmp{i}{j}$ was used by Scharnow, Tinnefeld and
Wegener~\cite{STWsorting} in the context of runtime analysis
of evolutionary algorithms on permutation sorting problems.

In this paper, we consider simple mutation-only evolutionary
algorithms that operate on permutations as follows.
\begin{description}
\item[1. Initialization:] Generate a set of $\mu$ permutations on
  $[n]$ uniformly at random.
\item[2. Mutation:] Generate a set of $\lambda$ ``offspring''
  permutations by applying some type of randomized move or mutation
  operator based on the above operations.
\item[3. Selection:] Select the fittest $\mu$ permutations out of the
  both parent and offspring population and continue at line 2.
\end{description}

The general form of such a mutation-only evolutionary algorithm is
typically called a $(\mu+\lambda)$~EA\@.  Algorithm~\ref{alg:ea}
illustrates the $(\mu+\lambda)$~EA\@.  Note that here, the
\texttt{select} function selects the $\mu$ best (with respect to
fitness) permutations from the parent population $P$ and the offspring
population $P'$. This selection mechanism ensures that best-so-far
individual found by generation $t$ remains in the population at
generation $t$, i.e., the $(\mu+\lambda)$~EA exhibits \emph{elitism}.

\begin{algorithm2e}
  \SetKwFunction{mutate}{mutate}
  \SetKwFunction{select}{select}
  \SetKwFor{For}{repeat}{}{}
  Choose a multiset $P$ of $\mu$ random permutations on $[n]$\;
  \For{forever}{%
    $P' \gets \{\}$\;
    \For{$\lambda$ times}{%
      choose $x$ uniformly at random from $P$\;
      $y \gets \mutate(x)$\;
      $P' \gets P' \uplus \{y\}$\;
    }
    $P \gets \select(P \uplus P')$ \;
  }
  \caption{The $(\mu+\lambda)$~EA.}
  \label{alg:ea}
\end{algorithm2e}

To generate offspring using the previously introduced inversion
operation, the $(\mu+\lambda)$~EA employs a mutation operator that
applies a number of random 2-opt moves that is drawn from a Poisson
distribution.  This mutation operator, called
\FuncSty{2-opt-mutation}, is outlined in
Function~\ref{fun:2-opt-mutation}.

\begin{function}
\SetKwInOut{Input}{input}
\SetKwInOut{Output}{output}
\SetProcNameSty{texttt}
\Input{A permutation $x$}
\Output{A permutation $y$}
\BlankLine
$y \gets x$\;
draw $s$ from a Poisson distribution with parameter $1$\; \label{li:Poisson}
perform $s+1$ random inversion operations on $y$\;
\Return $y$;
\caption{2-opt-mutation($x$)} \label{fun:2-opt-mutation}
\end{function}

If we restrict the mutation operation to a single inversion move and
set $\mu=\lambda=1$, the resulting algorithm is \emph{randomized local
  search} (RLS), which is simply a randomized hill-climber in the
space of permutations using 2-opt moves. RLS, illustrated in
Algorithm~\ref{alg:rls}, operates by iteratively applying random
inversion operations to a permutation in order to try and improve the
fitness of the corresponding tours.  Unlike the $(\mu+\lambda)$~EA,
RLS can only generate immediate inversion neighbors so it can become
trapped in local optima.

\begin{algorithm2e}
  \SetKwFor{For}{repeat}{}{}
  Choose a random permutation $x$ on $[n]$\;
  \For{forever}{%
    choose a random distinct pair of elements $(i,j)$ from $[n]$\;
    $y \gets \inv{i}{j}[x]$\;
    \lIf{$f(y) \leq f(x)$}{$x \gets y$}
  }
  \caption{Randomized Local Search (RLS).}
  \label{alg:rls}
\end{algorithm2e}

\subsection{Runtime analysis}
Evolutionary algorithms are simply computational methods that rely on
random decisions so we consider them here as special cases of
\emph{randomized algorithms}.  To analyze the running time of such an
algorithm, we examine the sequence of best-so-far solutions it
discovers during execution
\[
(x^{(1)},x^{(2)},\ldots,x^{(t)},\ldots)
\]
as an infinite stochastic process where $x^{(t)}$ denotes the best
permutation (in terms of fitness) in the population at iteration $t$.
The goal of runtime analysis is to study the random variable that
equals the first time $t$ when $x^{(t)}$ is a candidate solution of
interest (for example, an optimal solution).

The \emph{optimization time} of a randomized algorithm is a random
variable
\begin{equation}
  \label{eq:T}
  T = \inf \{t \in \Na : f(x^{(t)}) \text{~is optimal}\}.
\end{equation}
In the case of the $(\mu+\lambda)$~EA, this corresponds to the number
of generations (iterations of the mutation, evaluation, selection
process) that occur before an optimal solution has been introduced to
the population. This is somewhat distinct from the traditional measure
of the number of explicit calls to the fitness
function~\cite{AugerDoerr2011theory, BookNeuWit}. However, in the case
of the $(\mu+\lambda)$~EA, this metric can be obtained from $T$ by
$T_{f} = \mu + \lambda T$, since we need $\mu$ fitness function calls
to evaluate the initial population and each generation requires
evaluating an additional $\lambda$ individuals.  We discuss this
further in section~\ref{sec:param-runt-analys}.

In this paper, we will estimate the \emph{expected optimization time}
of the $(\mu+\lambda)$~EA\@. This quantity is calculated as $E(T)$,
the expectation of $T$. Since RLS can become trapped in local optima,
its expected optimization time is not necessarily finite. In this
case, we introduce the random variable
\begin{equation}
  \label{eq:Tloc}
  T_{loc} = \inf \{t \in \Na : x^{(t)} \text{~has no improving 2-opt neighbors}\}.
\end{equation}
and estimate $E(T_{loc})$, i.e., the expected time RLS takes to reach
a locally optimal solution.

\begin{definition}
  Let $\alpha$ be an indicator function defined on permutations of
  $[n]$ as
  \[
  \alpha(x) = 
  \begin{cases}
    1 & \text{if $C(x)$ contains intersections;}\\
    0 & \text{otherwise.}
  \end{cases}
  \]
\end{definition}

\begin{definition}
  Let $\beta$ be an indicator function defined on permutations of
  $[n]$ as
  \[
  \beta(x) =
  \begin{cases}
    1 - \alpha(x) & \text{if $f(x)$ non-optimal}\\
    0 & \text{otherwise.}
  \end{cases}
  \]
\end{definition}

The random variable corresponding to optimization time can be
expressed as the infinite series
\begin{equation}
  \label{eq:alphabeta}
  T = \sum_{t = 1}^\infty \bigl(\alpha(x^{(t)}) + \beta(x^{(t)})\bigr).
\end{equation}

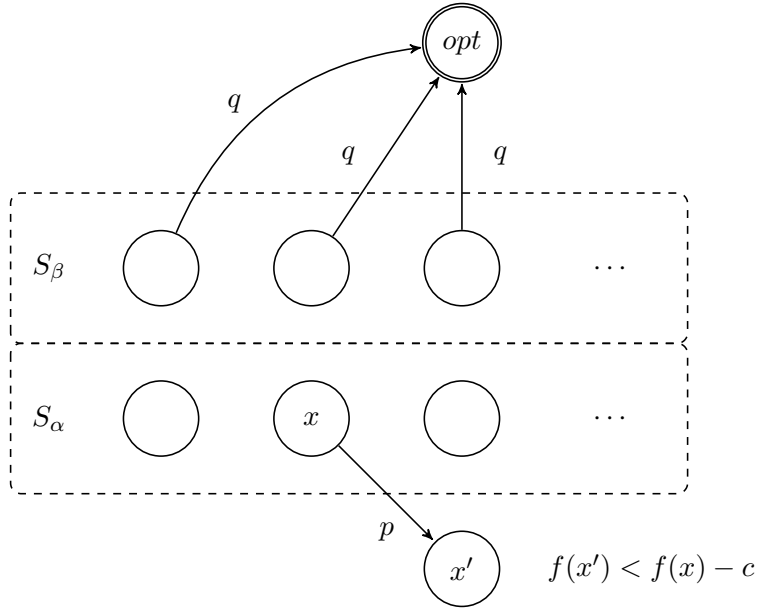
\begin{figure}
  \begin{center}
    \usetikzlibrary{arrows,automata}

    \begin{tikzpicture}[->,>=stealth',shorten >=1pt,auto,semithick,minimum size=10mm]

      \draw[dashed, rounded corners] (0,0) rectangle (9,2);
      \node[right] (beta) at (0,1) {$S_{\beta}$};
      \node[draw,circle] (b1) at (2,1) {};
      \node[draw,circle] (b2) at (4,1) {};
      \node[draw,circle] (b3) at (6,1) {};
      \node           (bdots) at (8,1) {$\cdots$};
      \node[draw,double,circle] (opt) at (6,4) {$\opt$};
      \draw[->] (b1) edge [bend left] node  [left] {$q$} (opt);
      \draw[->] (b2) edge node [left] {$q$} (opt);
      \draw[->] (b3) edge node [right] {$q$} (opt);

      \draw[dashed, rounded corners] (0,0) rectangle (9,-2);
      \node[right] (alpha) at (0,-1) {$S_{\alpha}$};
      \node[draw,circle] (a1) at (2,-1) {};
      \node[draw,circle] (a2) at (4,-1) {$x$};
      \node[draw,circle] (a3) at (6,-1) {};
      \node           (adots) at (8,-1) {$\cdots$};
      \node[draw,circle] (a4) at (6,-3) {$x'$};
      \node[right] (note) at (7,-3) {$f(x') < f(x) - c$};
      \draw[->] (a2) edge node [below] {$p$} (a4);

\end{tikzpicture}
  \end{center}
  \caption{\label{fig:markov} Partitioned Markov chain with transition
    probability bounds. Note that $x'$ can be in either $S_{\alpha}$
    or $S_{\beta}$.}
\end{figure}

In order to characterize the behavior of evolutionary algorithms and
express their expected runtime in terms of the number of points $n$
and the number of inner points $k$, we analyze the Markov chain
generated by the algorithm. We construct the Markov chain as
follows. Given a point set $V$, Each permutation $x$ on $[n]$ that
corresponds to a tour $C(x)$ that is non-optimal is a unique state in
the Markov chain. Finally, every permutation that corresponds to an
optimal tour in $V$ is associated with a single absorbing state
$\opt$. We then bipartition the state space (minus $\opt$) into two
sets $S_{\alpha}$ and $S_{\beta}$ where
\[
S_{\alpha} = \{ x : C(x) \text{~contains intersecting edges}\},
\]
and
\[
S_{\beta} = \{ x : C(x) \text{~is intersection-free~}\} \setminus \{\opt\},
\]
In terms of the Markov chain, the optimization time $T$ is the first
hitting time of the state $\opt$. We will need the following two
preparatory lemmas, the first of which is analogous to the additive
drift result due to He and Yao~\cite{DBLP:journals/ai/HeY01}.
\begin{lemma}
  \label{lem:alpha}
  If there are constants $0 < p < 1$ and $c > 0$ such that for any $x
  \in S_{\alpha}$, the transition probability from $x$ to some $x'$
  with $f(x') < f(x) - c$ is bounded below by $p$, then
  \[
  \E \left( \sum_{t=1}^\infty \alpha(x^{(t)}) \right) \leq p^{-1} n(d_{max} - d_{min})/c,
  \]
  where $d_{max}$ and $d_{min}$ are the maximum and minimum distances
  between any two points in $V$, respectively.
\end{lemma}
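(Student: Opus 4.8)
The plan is to use the fitness function itself, shifted by a trivial lower bound, as an additive drift potential, and to exploit the elitism of the process. First I would note that every Hamiltonian cycle $C(x)$ has exactly $n$ edges, each of weight in $[d_{min}, d_{max}]$, so $n\, d_{min} \le f(x) \le n\, d_{max}$ for every permutation $x$ of $[n]$. Consequently the sequence of best-so-far fitnesses $f(x^{(1)}), f(x^{(2)}), \ldots$ is non-increasing (the elitist selection keeps the current best in the population) and bounded below by $n\, d_{min}$; it therefore converges almost surely, and its total decrease satisfies $f(x^{(1)}) - \lim_{t\to\infty} f(x^{(t)}) \le n(d_{max} - d_{min})$.

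Next I would telescope this total decrease as $\sum_{t=1}^\infty \bigl(f(x^{(t)}) - f(x^{(t+1)})\bigr)$, every term of which is non-negative by elitism. Taking expectations and interchanging expectation with the infinite sum (legitimate by monotone convergence, since the terms are non-negative and the telescoped total is bounded) gives $n(d_{max} - d_{min}) \ge \sum_{t=1}^\infty \E\bigl[f(x^{(t)}) - f(x^{(t+1)})\bigr]$.

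The heart of the argument is a per-step conditional drift estimate. Condition on $x^{(t)}$. If $x^{(t)} \in S_\alpha$, the hypothesis states that with probability at least $p$ the chain moves to a state of fitness less than $f(x^{(t)}) - c$; since elitism forbids any fitness increase, this yields $\E\bigl[f(x^{(t)}) - f(x^{(t+1)}) \mid x^{(t)}\bigr] \ge p c$. If $x^{(t)} \notin S_\alpha$ (including the absorbing state $\opt$), we only use $\E\bigl[f(x^{(t)}) - f(x^{(t+1)}) \mid x^{(t)}\bigr] \ge 0$. Averaging over $x^{(t)}$ gives $\E\bigl[f(x^{(t)}) - f(x^{(t+1)})\bigr] \ge p c \cdot \Pr[x^{(t)} \in S_\alpha] = p c \cdot \E[\alpha(x^{(t)})]$. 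Summing over $t$, combining with the inequality from the previous paragraph, and using monotone convergence once more to pull the sum inside the expectation yields $p c \cdot \E\bigl[\sum_{t=1}^\infty \alpha(x^{(t)})\bigr] \le n(d_{max}-d_{min})$, and dividing by $pc$ is exactly the claimed bound $p^{-1} n(d_{max}-d_{min})/c$.

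I expect the only real care to be needed in the measure-theoretic bookkeeping --- justifying the two interchanges of an infinite sum with an expectation via non-negativity of the increments and finiteness of the telescoped total --- and in being precise that the conditional drift bound is applied at the state $x^{(t)}$, with elitist selection supplying both the ``no increase'' guarantee and the reason that the hypothesised improving transition is actually reflected in the best-so-far value. No geometric properties of intersecting edges enter this lemma; those are only needed afterwards to establish that suitable constants $p$ and $c$ exist.
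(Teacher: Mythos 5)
Your proof is correct, but it follows a different route than the paper's. The paper proves the lemma by restricting the stochastic process to the subsequence of best-so-far states lying in $S_{\alpha}$, bounding by $p^{-1}$ the expected number of such states whose fitness lies within a window of width $c$ below the current value (a geometric waiting-time argument), and then observing that at most $n(d_{max}-d_{min})/c$ such windows fit inside the fitness range $[n\,d_{min}, n\,d_{max}]$; it is a phase-counting argument on the restricted subsequence. You instead run a direct additive-drift computation with the fitness itself as potential: a per-step conditional drift of at least $pc$ whenever $x^{(t)} \in S_{\alpha}$ and at least $0$ otherwise (both courtesy of elitism), followed by telescoping and two applications of monotone convergence to exchange sums and expectations. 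The two arguments yield the same bound; yours has the advantage of making the measure-theoretic bookkeeping explicit and of avoiding the slightly informal ``continuing this argument'' step in the paper, while the paper's window decomposition is perhaps more transparent about where the factor $n(d_{max}-d_{min})/c$ comes from. One small point worth stating explicitly in your write-up: the hypothesis on $p$ is a statement about the best-so-far chain, so the event of probability at least $p$ really does force $f(x^{(t+1)}) < f(x^{(t)}) - c$, which is exactly what your conditional drift bound uses.
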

\begin{proof}
  By hypothesis, since $p > 0$ for any $x \in S_{\alpha}$ there is a
  nonzero probability the algorithm exits the state $x$ and never
  returns since the sequence of best-so-far solutions increases
  monotonically in fitness. Therefore, the expectation in the claim
  exists and is finite.

  Consider the finite stochastic process $(y^{(1)}, y^{(2)}, \ldots,
  y^{(m)})$, which is defined as the restriction of $(x^{(1)},x^{(2)},\ldots)$
  constructed by taking only permutations $y^{(t)} \in S_{\alpha}$ in
  the same order. It follows that 
  \[
  \E\left(\sum_{t=1}^{\infty} \alpha(x^{(t)}) \right) = \E(m).
  \]
  Let $y^{(1)}$ be the first state in the restricted stochastic
  process. Since every state has a transition to a state that improves
  the fitness by at least $c$ with transition probability at least
  $p$, the expected number of states $y^{(t)}$ with $f(y^{(1)}) \geq
  f(y^{(t)}) \geq f(y^{(1)}) - c$ is bounded by $p^{-1}$. Continuing
  this argument, the expected waiting time until the fitness of any
  state $y^{(t)}$ is improved by at least $c$ is at most $p^{-1}$.

  For any arbitrary permutation $x$, $n d_{max} \geq f(x) \geq n
  d_{min}$ so there can be at most $n(d_{max} - d_{min})/c$ such
  improvements possible.
\end{proof}

\begin{lemma}
  \label{lem:beta}
  If there is a constant $0 < q < 1$ such that for any $x \in
  S_{\beta}$, the transition probability from $x$ to $\opt$ is bounded
  below by $q$, then
  \[
  \E \left( \sum_{t=1}^\infty \beta(x^{(t)}) \right) \leq q^{-1}
  \]
\end{lemma}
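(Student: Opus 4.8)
The plan is to mirror the structure of the proof of Lemma~\ref{lem:alpha} by passing to the subsequence of best-so-far solutions that lie in $S_{\beta}$. Let $(z^{(1)}, z^{(2)}, \ldots, z^{(m)})$ be the restriction of the sequence $(x^{(1)}, x^{(2)}, \ldots)$ obtained by keeping, in order and with repetitions, exactly those best-so-far permutations $x^{(t)}$ with $\alpha(x^{(t)}) = 0$ and $f(x^{(t)})$ non-optimal, i.e.\ those with $x^{(t)} \in S_{\beta}$. (Repetitions occur because the best-so-far solution may remain unchanged over several generations.) By the definition of $\beta$, we have $\sum_{t=1}^{\infty} \beta(x^{(t)}) = m$, so the entire claim reduces to showing $\E(m) \le q^{-1}$.

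First I would note, exactly as in Lemma~\ref{lem:alpha}, that $m$ is almost surely finite with finite expectation: since $q > 0$, from any state in $S_{\beta}$ there is a positive probability of transitioning directly to the absorbing state $\opt$, so with probability one the process eventually leaves $S_{\beta}$ forever. For the quantitative bound, fix $i \ge 1$ and condition on the history $\mathcal{F}_i$ of the process up to and including the generation that produces $z^{(i)}$. Because $z^{(i)} \in S_{\beta}$, the hypothesis gives that the immediately following generation transitions to $\opt$ with probability at least $q$. Since $\opt$ is absorbing, the event $\{m \ge i+1\}$ (that a further $S_{\beta}$-visit ever occurs) forces that particular transition \emph{not} to have reached $\opt$; hence $\Pr(m \ge i+1 \mid \mathcal{F}_i) \le 1-q$ on $\{m \ge i\}$. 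Iterating over $i$ yields $\Pr(m \ge i) \le (1-q)^{i-1}$, and summing the geometric series gives $\E(m) = \sum_{i \ge 1} \Pr(m \ge i) \le \sum_{i \ge 1} (1-q)^{i-1} = q^{-1}$, as required.

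The one point I expect to require genuine care — and the only real obstacle — is that the $S_{\beta}$-visits need not occur at consecutive generations: because an offspring in $S_{\alpha}$ can strictly improve fitness, the elitist best-so-far can slip from $S_{\beta}$ back into $S_{\alpha}$ and only later return to $S_{\beta}$ (cf.\ the transitions pictured in Figure~\ref{fig:markov}). This is precisely what the restriction-to-$S_{\beta}$ device, combined with the observation that any return to $S_{\beta}$ witnesses a missed transition to $\opt$, is designed to absorb: whatever the process does between $z^{(i)}$ and $z^{(i+1)}$ is irrelevant, since the one step immediately after $z^{(i)}$ already supplies the factor $(1-q)$. The elitism of the $(\mu+\lambda)$~EA (monotonicity of best-so-far fitness) is used only to guarantee that $\opt$, once reached, is never left, which is all the conditioning argument needs.
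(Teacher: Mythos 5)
Your proposal is correct and follows essentially the same route as the paper: the paper simply asserts that each generation spent in $S_{\beta}$ is a Bernoulli trial with success probability at least $q$, so the time in $S_{\beta}$ is dominated by a geometric random variable with expectation at most $q^{-1}$. Your conditioning argument (bounding $\Pr(m \ge i+1 \mid \mathcal{F}_i)$ by $1-q$ and summing the resulting tail bounds), together with the explicit handling of returns to $S_{\beta}$ after excursions through $S_{\alpha}$, is just a more careful formalization of that same geometric-domination idea.
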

\begin{proof}
  Again, since $q > 0$ for any $x \in S_{\beta}$ there is a nonzero
  probability the algorithm exits the state $x$ and transits to the
  absorbing state $\opt$. This event is characterized as an
  independent Bernoulli trial with success probability at least $q$.
  It follows that the series in the claim can be estimated by a
  geometrically distributed random variable with parameter $q$, and
  therefore the expected time spent in states contained in the
  $S_{\beta}$ partition is bounded above by $q^{-1}$.
\end{proof}
In the next section, we will carefully analyze properties of the
Euclidean TSP to find suitable values for $p$, $q$, and $c$. This will
allow us to bound the expected runtime in terms of $n$ and $k$.

\section{Structural properties}\label{sec:struct-prop}
We now examine some useful structural properties of Euclidean TSP
instances related to the inversion operator. We also introduce some
structural constraints that will later facilitate the parameterized
analysis.  We begin by pointing out that if a tour is not
intersection-free, an intersection can always be removed by an
inversion. This notion is captured by the following lemma.
\begin{lemma}
  \label{lem:intersection}
  Let $x$ be a permutation such that $C(x)$ is not
  intersection-free. Then there exists an inversion that removes a
  pair of intersecting edges and replaces them with a pair of
  non-intersecting edges.  
\end{lemma}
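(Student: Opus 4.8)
The plan is to exhibit the required inversion explicitly. Since $C(x)$ is not intersection-free, I would start by fixing any pair of edges of $C(x)$ that intersect. Every tour edge has the form $\{x_p, x_{p+1}\}$ for some position $p$ (with index arithmetic mod $n$), so I would write the chosen intersecting pair as $e = \{x_p, x_{p+1}\}$ and $e' = \{x_q, x_{q+1}\}$, relabeling if necessary so that $1 \le p < q \le n$. By Proposition~\ref{prp:intersection}, two intersecting edges are the diagonals of a convex quadrilateral, so $e$ and $e'$ share no endpoint; hence the four points $x_p, x_{p+1}, x_q, x_{q+1}$ are pairwise distinct, and in terms of positions this rules out $q \in \{p-1, p, p+1\}$ (so $q \ge p+2$) as well as the combinations that would make $e$ and $e'$ adjacent across the wrap-around point.

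Next I would take the inversion $\inv{i}{j}$ with $i = p+1$ and $j = q$. The constraints just obtained give $1 \le i < j \le n$ and $(i,j) \notin \{(1,n),(2,n),(1,n-1)\}$, so we are in the non-degenerate regime in which $\inv{i}{j}$ deletes exactly the edges $\{x_{i-1},x_i\} = e$ and $\{x_j,x_{j+1}\} = e'$ and reconnects the tour with the two new edges $\{x_{i-1},x_j\} = \{x_p,x_q\}$ and $\{x_i,x_{j+1}\} = \{x_{p+1},x_{q+1}\}$. Thus this single inversion removes the intersecting pair $e, e'$, and it only remains to verify that the two replacement edges do not intersect each other.

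For this last point I would appeal again to Proposition~\ref{prp:intersection}, now applied to the intersecting diagonals $e = \{x_p,x_{p+1}\}$ and $e' = \{x_q,x_{q+1}\}$: the four endpoints form a convex quadrilateral whose cyclic vertex order alternates between the endpoints of $e$ and those of $e'$, i.e. $x_p, x_q, x_{p+1}, x_{q+1}$, and its four sides mutually do not intersect. The two replacement edges $\{x_p,x_q\}$ and $\{x_{p+1},x_{q+1}\}$ are precisely the pair of opposite sides of this quadrilateral, so they do not intersect, which completes the argument.

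I expect the only delicate part to be the bookkeeping in the middle paragraph: matching an arbitrary intersecting pair of tour edges to the fixed template $\{x_{i-1},x_i\}, \{x_j,x_{j+1}\}$ of edges removed by $\inv{i}{j}$ while confirming that the resulting inversion is one of the genuinely effective ones (avoiding the exceptional index pairs $(1,n), (2,n), (1,n-1)$ and the degenerate case in which the removed and inserted edge sets coincide). Once the move is pinned down, the geometric content is immediate from Proposition~\ref{prp:intersection}.
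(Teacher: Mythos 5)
Your proof is correct and follows essentially the same route as the paper's: label the intersecting pair as the two edges deleted by a single inversion $\inv{i}{j}$ and invoke Proposition~\ref{prp:intersection} to see that the replacement edges, being sides of the convex quadrilateral whose diagonals are the removed edges, do not intersect. Your extra bookkeeping ruling out the ineffective index pairs $(1,n)$, $(2,n)$, $(1,n-1)$ is a careful touch the paper's proof leaves implicit, but it does not change the argument.
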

\begin{proof}
  Suppose $\{x_{i-1},x_i\}$ and $\{x_j,x_{j+1}\}$ intersect in
  $C(x)$. Let $y = \inv{i}{j}[x]$. Then
  \begin{align*}
  C(x) \setminus C(y) &= \left\{\{x_{i-1},x_i\},\{x_j,x_{j+1}\}\right\}, 
  &\text{and}\\
  C(y) \setminus C(x) &= \left\{\{x_{i-1},x_j\},\{x_i,x_{j+1}\}\right\}.
\end{align*}
  By Proposition~\ref{prp:intersection}, since $\{x_{i-1},x_i\}$ and
  $\{x_j,x_{j+1}\}$ intersect, the two new edges introduced to $C(y)$
  by $\inv{i}{j}$ do not intersect.  Note that it is still
  possible that the introduced edges intersect with some of the
  remaining edges in $C(y)$.
\end{proof}

We denote by $\hull{V} \subseteq V$ the set of points in $V$ that
appear on the convex hull of $V$. A permutation $x$ respects
hull-order if any two points in the subsequence of $x$ induced by
$\hull{V}$ are consecutive in $x$ if and only if they are consecutive
on the hull.

\begin{lemma}
  \label{lem:hull-order}
  If $C(x)$ is intersection-free, then $x$ respects hull-order.
\end{lemma}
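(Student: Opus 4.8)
The plan is to argue by contradiction: assume $C(x)$ is intersection-free but that $x$ does not respect hull-order, and produce a forbidden pair of crossing edges. The first step is to reformulate the failure combinatorially. The hull-induced subsequence of $x$ and the cyclic order of the points of $\hull{V}$ around the convex hull are both cyclic sequences on the same $m := |\hull{V}|$ points, and (for $m \ge 3$) each has exactly $m$ consecutive pairs; since the definition only ever asserts that a consecutive-in-$x$ pair of hull points is also consecutive on the hull, a short counting argument shows that if this fails then there are two hull points $u$ and $v$ that are consecutive in the hull-induced subsequence of $x$ but not adjacent on the hull. Fix such a pair and pick hull points $c$ and $d$ lying strictly between $u$ and $v$ on the two respective arcs of the hull boundary — these exist precisely because $u$ and $v$ are not hull-adjacent. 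Because $u$ and $v$ are consecutive in the hull-induced subsequence of $x$, the points $c$ and $d$ lie on the same one of the two tour-arcs joining $u$ to $v$; equivalently, the four points occur along $C(x)$ in cyclic order $(u,v,c,d)$ after possibly swapping the names of $c$ and $d$.

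Next I single out two sub-paths of the tour. Let $P$ be the arc of $C(x)$ from $u$ to $v$ that contains no hull point in its interior, and let $Q'$ be the arc of $C(x)$ from $c$ to $d$ that avoids $u$ and $v$; the cyclic order $(u,v,c,d)$ guarantees that both exist and that they share neither an edge nor a vertex. The geometric crux is that every interior vertex of $P$ is an inner point, so the relative interior of $P$ lies in the open interior of the convex hull of $V$, while its endpoints $u,v$ lie on the hull boundary and are non-adjacent there. Since $C(x)$ is intersection-free, $P$ is a simple polygonal arc, hence a cross-cut of the convex hull of $V$: it partitions that region into two closed pieces $R_1$ and $R_2$ with $R_1 \cap R_2 = P$, where the arc of the hull boundary running from $u$ to $v$ through $c$ lies in $R_1$ and the one through $d$ lies in $R_2$; in particular $c \in R_1 \setminus P$ and $d \in R_2 \setminus P$. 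On the other hand $Q'$ is a polygonal path all of whose edges join points of $V$, so $Q'$ is contained in the convex hull of $V$ and therefore, going from $c$ to $d$, must meet $R_1 \cap R_2 = P$. But $P$ and $Q'$ are disjoint sub-arcs of the intersection-free polygon $C(x)$: they share no vertex, no edge of one crosses an edge of the other, and (as no three points are collinear) their edges cannot overlap, so $P \cap Q' = \emptyset$ — a contradiction, which completes the proof.

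The step I expect to be the main obstacle is making the ``cross-cut'' claim precise: that a simple polygonal arc with endpoints on the boundary of a convex region and relative interior inside the region divides the region into exactly two pieces meeting only along the arc, with the two complementary boundary arcs falling into different pieces. This is a routine consequence of the Jordan curve theorem applied to the two simple closed curves obtained by closing $P$ with each of the two boundary arcs between $u$ and $v$, but it is the only genuinely topological ingredient and should be invoked with some care. Everything else is elementary; one should, however, dispose of the degenerate case $m \le 3$ (where $x$ respects hull-order vacuously, since there is only one cyclic order on at most three points up to reflection), and may note as a sanity check that, since $u,c,v,d$ occur in this cyclic order around the hull, the diagonals $\overline{uv}$ and $\overline{cd}$ of the convex quadrilateral they span cross (cf.\ Proposition~\ref{prp:intersection}), which is the intuition behind $P$ separating $c$ from $d$.
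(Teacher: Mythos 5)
Your proof is correct, but it is a genuinely different route from the paper's: the paper proves this lemma purely by citation, deferring ``immediately'' to the proof of Theorem~2 of Quintas and Supnick~\cite{Quintas1965properties}, whereas you give a self-contained argument. Your reduction of the failure of hull-order (via the counting argument on the two $m$-edge cycles) to a pair $u,v$ consecutive in the hull-induced subsequence but not hull-adjacent is sound, and the separation argument is the right geometric core: the sub-arc $P$ of the simple closed polygon $C(x)$ has its relative interior inside the hull, so it is a cross-cut separating the two boundary arcs, while the disjoint sub-arc $Q'$ must join $c$ to $d$ across it, contradicting that two vertex-disjoint sub-arcs of an intersection-free tour are disjoint point sets (here the no-three-collinear hypothesis is what rules out touchings that are not crossings, as you implicitly use). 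Two small glosses are worth tightening if you write this up: the case where $P$ is a single edge $\{u,v\}$ has no interior vertices, so the claim that its relative interior lies in the open interior of the hull then rests on the separate (easy) fact that an open diagonal between non-adjacent hull vertices is interior; and, as you note, the cross-cut property itself needs the Jordan-curve-style justification you sketch. What the two approaches buy: the paper's citation is maximally short but leaves the reader to extract the argument from a 1965 paper, while your proof makes the result self-contained and isolates exactly which hypotheses (intersection-freeness, general position) and which single topological fact are used; it is essentially the classical argument reconstructed rather than quoted.
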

\begin{proof}
  This follows immediately from the proof of Theorem 2
  in~\cite{Quintas1965properties}.
\end{proof}
Lemma~\ref{lem:hull-order} entails the following bound on the number
of distinct intersection-free tours.
\begin{lemma}
  \label{lem:number-of-intersection-free-tours}
  Suppose $|V \setminus \hull{V}| = k$.  Then there are at most
  $\binom{n}{k} k!$ distinct intersection-free tours.
\end{lemma}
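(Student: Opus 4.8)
The plan is to count intersection-free tours by exploiting the rigid structure imposed by Lemma~\ref{lem:hull-order}: in any intersection-free tour the hull points must appear in their cyclic hull order. So a tour is determined by (1) which hull-order cyclic sequence of the $n-k$ hull points we use (there is essentially only one, up to rotation and reflection, which are irrelevant as cycles), and (2) how the $k$ inner points are interleaved among the $n-k$ ``gaps'' between consecutive hull points along the tour. First I would fix the hull points in their hull order as a cyclic skeleton. Then I would observe that building an intersection-free tour amounts to inserting the $k$ inner points into this skeleton, where each inner point is assigned to one of the gaps and the inner points within a gap are linearly ordered.

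The counting step is then straightforward: choosing an ordered placement of the $k$ inner points into the $n-k$ gaps of the hull skeleton is exactly the number of ways to form a sequence of the $k$ inner points partitioned into $n-k$ (possibly empty) ordered blocks. The cleanest way to bound this is to say: first choose which $k$ positions among all $n$ tour positions are occupied by inner points — at most $\binom{n}{k}$ choices — and then choose which inner point goes in which of those positions — at most $k!$ orderings. Hence the count is at most $\binom{n}{k} k!$, and this also absorbs the overcounting from the cyclic-rotation and reflection symmetries of the tour (we only need an upper bound, so a crude accounting suffices). This yields the claimed bound $\binom{n}{k} k!$.

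The main obstacle — really the only subtle point — is making the reduction ``an intersection-free tour is determined by the hull skeleton plus a placement of inner points'' fully rigorous. Lemma~\ref{lem:hull-order} says the hull points are consecutive in $x$ iff consecutive on the hull, so traversing the tour we see the hull points in exactly the hull cyclic order, with arbitrary inner points possibly appearing between consecutive hull points; but I should be careful that distinct intersection-free tours may still coincide as the same permutation up to rotation/reflection, and that different placements might in principle give the same cyclic tour. Since we are proving an upper bound, I would simply note that the map from intersection-free tours to ``choice of $k$ occupied positions plus ordering of the $k$ inner points among them'' is well-defined (fix a canonical starting point, e.g., the vertex labeled $1$, to pin down the permutation representative) and injective on the set of distinct cyclic tours, so the number of distinct intersection-free tours is at most the size of the codomain, namely $\binom{n}{k} k!$. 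No drift or probabilistic machinery is needed here; this is a purely combinatorial consequence of Lemma~\ref{lem:hull-order}.
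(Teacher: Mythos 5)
Your overall strategy---use Lemma~\ref{lem:hull-order} to freeze the hull points' order and then count placements of the $k$ inner points, giving $\binom{n}{k}\,k!$---is the same as the paper's, but the step you yourself identify as the crux, injectivity of your encoding, fails for the canonicalization you chose. The code ``which of the $n$ positions hold inner points, and which inner point sits where'' forgets where the hull cycle starts among the remaining positions and in which direction it runs, and pinning the representative by ``start at the vertex labeled $1$'' does not recover this information. Concretely, let $k=1$ and let the vertex labeled $1$ be the inner point, with hull points $h_1,\dots,h_{n-1}$. Inserting the inner point between any two hull-consecutive points yields an intersection-free tour (a segment from an interior point to a hull vertex meets the hull boundary only at that vertex), so there are $n-1$ distinct intersection-free tours; every one of them has a representative beginning with vertex $1$, and all of them receive the identical code: inner-point position set $\{1\}$ with the trivial ordering. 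The map is $(n-1)$-to-one, so your injectivity claim is false as stated (the bound survives, but not by this argument). Even when vertex $1$ is a hull point there remains a direction ambiguity---each cyclic tour has two linear representatives starting at vertex $1$---so the map is not even well-defined without further choices. Your remark about rotations and reflections also points the wrong way: overcounting by symmetries is harmless for an upper bound; the real danger is exactly the collisions above.

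The repair is what the paper does: fix one hull vertex $v\in\hull{V}$ and, for each intersection-free tour, choose the traversal direction and the cut so that the hull points appear in one fixed linear order beginning with $v$ (possible by Lemma~\ref{lem:hull-order}). Then the non-inner positions are filled by the hull points in a determined way, so ``positions of the inner points plus their ordering'' reconstructs the entire permutation, and the number of permutations of $[n]$ in which $n-k$ prescribed points occur in a prescribed order is exactly $\binom{n}{k}\,k!$, which is the paper's counting claim. Alternatively, your first decomposition already suffices if you carry it through: with the hull skeleton fixed and oriented, a tour is determined by assigning each inner point to one of the $n-k$ gaps and ordering the inner points within each gap, giving at most $(n-k)(n-k+1)\cdots(n-1)=\binom{n-1}{k}\,k!\le\binom{n}{k}\,k!$ intersection-free tours.
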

\begin{proof}
  We first claim the number of permutations on $n$ points in which
  some subset of $p < n$ points all remain in the same fixed order is
  $\binom{n}{n-p} (n-p)!$.  To construct such a permutation, each of
  the $n-p$ ``free'' points can be placed before all $p$ points, or
  after any of the $p$ points. Thus, for each of the $n-p$ free
  points, there are $p+1$ positions relative to the non-free points
  from which we can choose (with replacement) yielding
  $\binom{(p+1)+(n-p)-1}{n-p} = \binom{n}{n-p}$ choices for relative
  placement. For each such choice, there are $(n-p)!$ distinct ways to
  order the free points.

  Fix some point $v \in \hull{V}$. Suppose $C(x)$ is
  intersection-free. There is a transformation (specifically, a cyclic
  permutation) that maps $x$ to a sequence $z$ in which $v$ appears to
  the left of any other element in $\hull{V}$, yet $C(x) =
  C(z)$. Hence the number of distinct intersection-free tours is
  bounded by the number of sequences of points in $V$ where the
  elements of $\hull{V}$ appear in hull-order, and $v$ appears to the
  left of any other element of $\hull{V}$.  By the above claim, there
  are exactly $\binom{n}{k} k!$ such permutations and the lemma is
  proved.
\end{proof}

We also can derive from Lemma~\ref{lem:hull-order} the following
convenient bound on the minimal number of operations necessary to
transform an intersection-free tour into a permutation that
corresponds to a globally optimal tour.
\begin{lemma}
  \label{lem:jumps-to-solve-intersection-free}
  Suppose $|V \setminus \hull{V}| = k$ and $C(x)$ is an
  intersection-free tour on $V$. Then there is a sequence of at most
  $k$ jump operations that transforms $x$ into an optimal permutation.
\end{lemma}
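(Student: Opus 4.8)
The plan is to exploit Lemma~\ref{lem:hull-order}, which tells us that an intersection-free tour $C(x)$ already has all the hull points $\hull{V}$ appearing in hull-order. Since the optimal tour is itself intersection-free (an optimal Euclidean tour has no crossing edges, by Proposition~\ref{prp:intersection} an intersection could be removed strictly decreasing the length), it too respects hull-order. So both $x$ and an optimal permutation $x^*$ place the $n-k$ hull points in the same cyclic order; they can differ only in where the $k$ inner points are inserted relative to that fixed cyclic skeleton. The strategy is therefore: first argue that, up to the irrelevant symmetries (cyclic rotation and overall reversal, which do not change $C(x)$), we may assume $x$ and $x^*$ agree on the restriction to $\hull{V}$; then show that each inner point can be relocated to its correct slot using a single jump operation, and that these $k$ jumps can be sequenced so that moving one inner point does not disturb the placement of the others already moved.

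Concretely, I would proceed as follows. First, normalize: pick a hull point $v$ and a rotation/reversal so that in both $x$ and $x^*$ the sequence begins at $v$ and traverses $\hull{V}$ in the same direction; this changes neither $C(x)$ nor $C(x^*)$. Now the hull points occupy the same relative order in the two sequences, and the only difference is the interleaving of the $k$ inner points. Second, process the inner points one at a time. For an inner point $w$ that is currently in the "wrong" position, a jump operation $\jmp{i}{j}$ removes $w$ from position $i$ and reinserts it at position $j$, shifting the intervening elements; choosing $j$ to be the position of $w$'s target slot in $x^*$ moves $w$ into place in one operation. The key bookkeeping point is that a single jump moves exactly one element and preserves the relative order of all others, so once an inner point has been placed correctly (relative to the hull skeleton and to the already-placed inner points), subsequent jumps of other inner points leave it undisturbed — provided we always reinsert the next inner point into the position it should occupy in the already-partially-corrected permutation. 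After at most $k$ such jumps every inner point sits in its correct slot, every hull point is already correct, so the resulting permutation induces exactly $C(x^*)$ and is thus optimal.

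The main obstacle is making the "process one inner point per jump" argument fully rigorous: I need an invariant of the form "after $t$ jumps, the first $t$ inner points (in the order they appear in $x^*$) are in their final relative positions and the hull points are in hull-order," and I must verify that the $(t+1)$-st jump can always be realized as a single $\jmp{i}{j}$ that establishes the invariant for $t+1$ without breaking it for the first $t$. This amounts to checking that relocating one element by a jump only permutes elements strictly between its source and target positions, so by always choosing the target to be consistent with $x^*$'s order we never move an already-fixed element "past" another already-fixed element — a clean induction, but one that requires care with the index arithmetic and with the two symmetry reductions. A secondary, essentially routine point is confirming that an inner point already in its correct slot needs no jump, so the count is "at most $k$" rather than exactly $k$; and that the normalization step genuinely costs nothing because rotations and reversals of a permutation leave the induced Hamiltonian cycle unchanged.
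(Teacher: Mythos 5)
Your proposal takes essentially the same route as the paper's proof: since both $x$ and an optimal tour are intersection-free, both respect hull-order by Lemma~\ref{lem:hull-order}, so one can fix an optimal permutation $x^\star$ whose hull points appear in the same linear order as in $x$ and then move each of the $k$ inner points into place with one jump apiece (the paper states this last step without spelling out the invariant you sketch, so your write-up is if anything more careful). One small adjustment: apply the rotation/reversal normalization only to the \emph{choice} of $x^\star$ (any permutation inducing the optimal cycle is an acceptable target), since rotating or reversing $x$ itself is not a jump operation and would not be accounted for in the bound.
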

\begin{proof}
  By Lemma~\ref{lem:hull-order}, since $C(x)$ is intersection-free,
  $x$ respects hull-order. Let $x^\star$ be an optimal permutation
  such that the elements in $\hull{V}$ have the same linear order in
  $x^\star$ as they do in $x$.  Then $x$ can be transformed into
  $x^\star$ by moving each of the $k$ inner points into their correct
  position.
\end{proof}

\begin{lemma}
  \label{lem:inversions-to-solve-intersection-free}
  Suppose $|V \setminus \hull{V}| = k$ and $C(x)$ is an
  intersection-free tour on $V$. Then there is a sequence of at most
  $2k$ inversions that transforms $x$ into an optimal permutation.
\end{lemma}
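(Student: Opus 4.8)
The plan is to reduce the claim to Lemma~\ref{lem:jumps-to-solve-intersection-free}, which already guarantees a sequence of at most $k$ jump operations transforming $x$ into an optimal permutation. So it suffices to show that each jump operation $\jmp{i}{j}$ can be simulated by at most two inversion operations. This is a purely combinatorial fact about permutations and is independent of the Euclidean geometry.

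\textbf{Simulating a jump by two inversions.}
Without loss of generality suppose $i < j$. Recall that $\jmp{i}{j}[x]$ moves $x_i$ to position $j$, shifting $x_{i+1},\ldots,x_j$ one position to the left, so that the affected block $(x_i, x_{i+1},\ldots,x_j)$ becomes $(x_{i+1},\ldots,x_j,x_i)$. The key observation is that this block reordering can be achieved by two successive segment reversals. First apply $\inv{i}{j}$, which reverses the block to $(x_j, x_{j-1},\ldots,x_{i+1},x_i)$; then apply $\inv{i}{(j-1)}$, which reverses only the first $j-i$ entries of that block, yielding $(x_{i+1},\ldots,x_{j-1},x_j,x_i)$ --- exactly the block produced by $\jmp{i}{j}$. (Symmetrically, $\jmp{j}{i}$ can be simulated by $\inv{i}{j}$ followed by $\inv{(i+1)}{j}$; one should also remark on the boundary case $j = i+1$, where a single inversion --- or indeed no operation at all, since $\jmp{i}{(i+1)}$ has no effect on the induced tour --- already suffices.) Thus any one jump is realized by at most two inversions.

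\textbf{Assembling the sequence.}
Composing the simulations: take the sequence $\sigma_1,\ldots,\sigma_m$ of jumps from Lemma~\ref{lem:jumps-to-solve-intersection-free} with $m \le k$, and replace each $\sigma_\ell$ by its (at most two) inversions. The concatenation is a sequence of at most $2m \le 2k$ inversions that transforms $x$ into the same optimal permutation $x^\star$, proving the lemma. Note that the intermediate permutations need not be intersection-free or fitness-monotone --- the statement only asserts the \emph{existence} of such a transforming sequence, so no care about the quality of intermediate tours is required.

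\textbf{Anticipated obstacle.}
The only real subtlety is bookkeeping the index arithmetic in the two-inversion simulation and handling the degenerate cases where a jump spans adjacent positions or wraps around modulo $n$; once the block-reversal identity $\jmp{i}{j} = \inv{i}{(j-1)} \circ \inv{i}{j}$ is verified on the affected segment, the rest is immediate. There is no geometric content to worry about here.
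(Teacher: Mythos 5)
Your proof is correct and follows essentially the same route as the paper: invoke Lemma~\ref{lem:jumps-to-solve-intersection-free} to get at most $k$ jumps, then simulate each jump by at most two inversions via the same composition $\inv{i}{(j-1)}\circ\inv{i}{j}$ (with the adjacent-position case needing only one), giving $2k$ inversions in total. Your observation that intermediate tours need not be intersection-free or improving is consistent with the paper, which likewise only asserts existence of the sequence.
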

\begin{proof}  
  By Lemma~\ref{lem:jumps-to-solve-intersection-free}, $k$ jump
  operations are enough to transform $x$ into an optimal
  permutation. The claim then immediately follows from the fact that 
  any jump operation can be simulated by at most two consecutive
  inversion operations. In particular,
  \[
  \jmp{i}{j}[x] =
  \begin{cases}
    \inv{i}{j}[x] & \text{if $|i-j| = 1$;}\\
    \inv{i}{(j-1)}[\inv{i}{j}[x]] & \text{if $i - j < 1$;}\\
    \inv{(j+1)}{i}[\inv{j}{i}[x]] & \text{if $i - j > 1$.}
  \end{cases}
  \]
  Since a sequence of at most $k$ jump operations can be simulated by
  a sequence of at most $2k$ inversion operations, the claim is
  proved.
\end{proof}

A challenge to the runtime analysis of algorithms that employ edge
exchange operations such as 2-opt is that, when points are allowed in
arbitrary positions, the minimum change in fitness between neighboring
solutions can be made arbitrarily small. Indeed, proof techniques for
worst-case analysis often leverage this
fact~\cite{Englert2007worst}. To circumvent this, we impose bounds on
the angles between points, which allows us to express runtime results
as a function of trigonometric expressions involving these
bounds. Momentarily, we will refine this further by introducing a
class of TSP instances embedded in an $m \times m$ grid. In that case,
we will see that the resulting trigonometric expression is bounded by
a polynomial in $m$.

We say $V$ is \emph{angle-bounded by $\epsilon > 0$} if for any three
points $u,v,w \in V$, $0 < \epsilon < \theta < \pi-\epsilon$ where
$\theta$ denotes the angle formed by the line from $u$ to $v$ and the
line from $v$ to $w$. This allows us to express a bound in terms of
$\epsilon$ on the change in fitness from a move that removes an
inversion.

\begin{lemma}
  \label{lem:improvement}
  Suppose $V$ is angle-bounded by $\epsilon$. Let $x$ be a permutation
  such that $C(x)$ is not intersection-free. Let $y = \inv{i}{j}[x]$
  be the permutation constructed from an inversion on $x$ that
  replaces two intersecting edges in $C(x)$ with two non-intersecting
  edges.\footnote{Lemma~\ref{lem:intersection} guarantees the
    existence of such an inversion.} Then, if $d_{min}$ denotes the
  minimum distance between any two points in $V$, $f(x) - f(y) > 2
  d_{min} \left(\frac{1 - \cos(\epsilon)}{\cos(\epsilon)}\right)$.
\end{lemma}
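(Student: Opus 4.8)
The plan is to analyze the local geometry around the inversion that removes the pair of intersecting edges. By Proposition~\ref{prp:intersection}, the two intersecting edges $\{x_{i-1},x_i\}$ and $\{x_j,x_{j+1}\}$ are the diagonals of a convex quadrilateral with vertices $x_{i-1}, x_j, x_i, x_{j+1}$ (in convex-position order), meeting at a point $p$. The inversion $\inv{i}{j}$ deletes these two diagonals and inserts the two ``sides'' $\{x_{i-1},x_j\}$ and $\{x_i,x_{j+1}\}$; all other edges of the tour are unchanged, so $f(x)-f(y) = \bigl(d(x_{i-1},x_i)+d(x_j,x_{j+1})\bigr) - \bigl(d(x_{i-1},x_j)+d(x_i,x_{j+1})\bigr)$. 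So it suffices to lower-bound this quantity.

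First I would split each diagonal at the crossing point $p$, writing $d(x_{i-1},x_i) = d(x_{i-1},p)+d(p,x_i)$ and $d(x_j,x_{j+1}) = d(x_j,p)+d(p,x_{j+1})$. Then I would pair up the four sub-segments so that the triangle inequality applies to each of the two new edges: the triangle $x_{i-1}\,p\,x_j$ gives $d(x_{i-1},x_j) < d(x_{i-1},p)+d(p,x_j)$, and the triangle $x_i\,p\,x_{j+1}$ gives $d(x_i,x_{j+1}) < d(x_i,p)+d(p,x_{j+1})$. Subtracting, $f(x)-f(y) > 0$; but this only gives a positive gap, not a gap bounded away from $0$. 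To get the quantitative bound I would instead sharpen the triangle inequality using the angle at $p$: in triangle $x_{i-1}\,p\,x_j$, the angle at $p$ is $\pi-\theta$ where $\theta$ is the angle at $p$ in one of the original edges' configuration, and the angle-boundedness hypothesis forces this angle to lie in $(\epsilon, \pi-\epsilon)$. The law of cosines then yields $d(x_{i-1},x_j)^2 = a^2+b^2-2ab\cos(\text{angle})$ with $a = d(x_{i-1},p)$, $b = d(p,x_j)$, and the angle bounded away from $0$ and $\pi$; from this one extracts $(a+b) - d(x_{i-1},x_j) > (a+b)\cdot\bigl(1-\cos(\epsilon)\bigr)/\text{(something)}$, and similarly for the other triangle. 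Summing the two deficits and bounding each relevant sub-segment length from below by $d_{\min}/\cos(\epsilon)$ (or a comparable quantity derived from the angle bound and the fact that the far vertex is at distance at least $d_{\min}$) should produce exactly $f(x)-f(y) > 2d_{\min}\bigl(\tfrac{1-\cos(\epsilon)}{\cos(\epsilon)}\bigr)$.

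The cleanest route to the constant is probably the following elementary inequality: for a triangle with two sides of lengths $a,b$ enclosing an angle $\psi$, the third side $c$ satisfies $a+b-c \ge (a+b)(1-\cos\psi)$ when... — actually I would derive the sharp form $a + b - c = (a+b) - \sqrt{a^2+b^2-2ab\cos\psi}$ and bound the square root from above, using $\psi \in (\epsilon,\pi-\epsilon)$ so $\cos\psi < \cos\epsilon$ (for $\psi < \pi/2$) or controlling the obtuse case separately. Then I would relate the sub-segment lengths $d(x_{i-1},p)$, etc., to $d_{\min}$: because no three points are collinear and the angles are $\epsilon$-bounded, the crossing point $p$ cannot be too close to a vertex, and a short trigonometric argument in the triangle formed by $x_{i-1}$, $x_i$, and a third hull-quadrilateral vertex gives that the segment from a vertex to $p$ has length at least $d_{\min}\cdot(\text{trig factor})$, contributing the $1/\cos(\epsilon)$ denominator.

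The main obstacle I anticipate is the bookkeeping of \emph{which} angle plays the role of $\theta$ and making sure the angle-boundedness assumption (stated for angles formed at a \emph{vertex} by two edges) actually transfers to the angles at the \emph{crossing point} $p$ and to the vertex angles of the convex quadrilateral — one has to invoke that the quadrilateral angles are sums/differences of vertex-edge angles, and use $\epsilon < \theta < \pi-\epsilon$ on each constituent angle. Getting the precise constant $2d_{\min}(1-\cos\epsilon)/\cos\epsilon$ rather than some weaker constant will require choosing the right two triangles and the right lower bound on the sub-segments; if the natural argument yields a different (but still positive, $\epsilon$-dependent) constant, that would still suffice for all downstream runtime bounds, but I would aim to match the stated constant by being careful about the obtuse/acute case split at $p$.
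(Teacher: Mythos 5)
Your overall skeleton coincides with the paper's: the fitness difference telescopes to the two removed diagonals minus the two inserted sides, the diagonals are split at the crossing point $p$, and the two relevant triangles are exactly $x_{i-1}\,p\,x_j$ and $x_i\,p\,x_{j+1}$. But the quantitative core is left as a plan, and the plan as sketched rests on two claims that do not hold. First, the angle at $p$ is \emph{not} forced into $(\epsilon,\pi-\epsilon)$ by the hypothesis: in triangle $x_{i-1}\,p\,x_j$ the angle at $p$ equals $\pi-\theta_1-\theta_2$, where $\theta_1,\theta_2$ are the angles at $x_{i-1}$ and $x_j$, and these can both be close to $\pi/2$ while every three-point angle of $V$ stays $\epsilon$-bounded (e.g.\ $x_{i-1}=(0,0)$, $x_i=(L,1)$, $x_j=(0,1)$, $x_{j+1}=(L,0)$ with $L$ large gives an arbitrarily shallow crossing), so the lower bound on the crossing angle is simply false; only the upper bound $\pi-2\epsilon$ is available, and you neither isolate nor prove it. Second, the sub-segments $d(x_{i-1},p)$ etc.\ are \emph{not} bounded below by $d_{min}/\cos(\epsilon)$ (they are not even bounded below by $d_{min}$); the crossing point can lie close to a vertex, and the best lower bound the angle hypothesis gives is of order $d_{min}\sin(\epsilon)$. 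Carried out correctly, your law-of-cosines route therefore yields a constant on the order of $d_{min}\sin(\epsilon)\left(1-\cos(2\epsilon)\right)$, which is weaker than the stated $2d_{min}\left(\frac{1-\cos(\epsilon)}{\cos(\epsilon)}\right)$; as you note this would still support the downstream runtime structure after redefining $\fn$, but it does not prove the lemma as stated.

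The missing idea — which also dissolves your worry about ``transferring'' the angle bound to $p$ or to quadrilateral angles — is that no transfer is needed: since $p$ lies on the removed segment $\{x_{i-1},x_i\}$, the angle of triangle $x_{i-1}\,p\,x_j$ at $x_{i-1}$ is literally the three-point angle formed at $x_{i-1}$ by $x_j$ and $x_i$, and since $p$ lies on $\{x_j,x_{j+1}\}$, the angle at $x_j$ is the three-point angle formed at $x_j$ by $x_{i-1}$ and $x_{j+1}$; both are in $(\epsilon,\pi-\epsilon)$ by hypothesis. The paper then applies the law of sines in this triangle to get $d(x_{i-1},p)+d(x_j,p) = d(x_{i-1},x_j)\,\frac{\sin\theta_1+\sin\theta_2}{\sin(\theta_1+\theta_2)} \geq d(x_{i-1},x_j)\,\frac{2\sin(\epsilon)}{\sin(2\epsilon)} = \frac{d(x_{i-1},x_j)}{\cos(\epsilon)}$, and symmetrically for the other triangle. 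Crucially, the $d_{min}$ in the final bound comes from the two \emph{new} edges $d(x_{i-1},x_j), d(x_i,x_{j+1}) \geq d_{min}$, not from the sub-segments; summing the two inequalities gives $f(x)-f(y) > \left[d(x_{i-1},x_j)+d(x_i,x_{j+1})\right]\left(\frac{1}{\cos(\epsilon)}-1\right) \geq 2d_{min}\left(\frac{1-\cos(\epsilon)}{\cos(\epsilon)}\right)$ with no acute/obtuse case split at $p$ and no bound on where $p$ sits.
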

\begin{proof}
  The inversion $\inv{i}{j}$ removes intersecting edges $\{u,v\}$ and
  $\{s,t\}$ from $C(x)$ and replaces them with the pair $\{s,u\}$ and
  $\{t,v\}$ to form $C(y)$. We label the point at which the original
  edges intersect as $p$.

  Denote as $\theta_u$ and $\theta_v$ the angles between the line
  segments that join at each point $u$ and $v$, respectively.  Since
  all angles are strictly positive, the points $u$, $s$, and $p$ form
  a nondegenerate triangle with angles $\theta_s$, $\theta_u$, and
  $(\pi - (\theta_s + \theta_u))$. By the law of sines we have
  \[
  \frac{d(s,u)}{\sin\left(\pi - (\theta_s + \theta_u)\right)} =
  \frac{d(s,u)}{\sin\left(\theta_s + \theta_u\right)} =
  \frac{d(u,p)}{\sin(\theta_s)} =
  \frac{d(s,p)}{\sin(\theta_u)}.  
  \]  
  Hence,
  \begin{equation}
    \label{eq:triangle1}
    d(u,p) + d(s,p) = d(s,u) \left(\frac{\sin (\theta_s) + \sin (\theta_u)}
      {\sin \left(\theta_s + \theta_u\right)}\right).
  \end{equation}
  Since $u$, $s$, and $p$ form a triangle, $0 < (\theta_s + \theta_u) < \pi$ and we have
  \begin{align*}
    0 &< \sin (\theta_s) < 1 & \text{since~} 0 &< \theta_s < \pi,\\
    0 &< \sin (\theta_u) < 1 & \text{since~} 0 &< \theta_u < \pi,\\
    0 &< \sin \left( \theta_s + \theta_u \right) < 1 & \text{since~} 0 &< \theta_s + \theta_u < \pi.
  \end{align*}

  Furthermore, since $V$ is angle-bounded by $0 < \epsilon < \pi -
  \epsilon$, by~(\ref{eq:triangle1}),
  \begin{equation}
    \label{eq:triangle2}
    d(u,p) + d(s,p) > d(s,u)\left(\frac{\sin (\epsilon) + \sin (\epsilon)}{\sin (\epsilon + \epsilon)}\right) > d(s,u).
  \end{equation}
  Since there is also a nondegenerate triangle formed by the points
  $t$, $v$, and $p$, a symmetric argument holds and thus
  \begin{equation}
    \label{eq:triangle3}
    d(t,p) + d(v,p) > d(t,v)\left(\frac{\sin (\epsilon) + \sin (\epsilon)}{\sin (\epsilon + \epsilon)}\right) > d(t,v).
  \end{equation}  
  Combining Equations~(\ref{eq:triangle2})
  and~(\ref{eq:triangle3}) we have
  \begin{align*}
   & f(x) - f(y) = \left[ d(u,v) + d(s,t) \right] - \left[ d(t,v) + d(s,u) \right]\\
    &= d(u,p) + d(v,p) + d(t,p) + d(s,p) 
     - \left[ d(t,v) + d(s,u) \right]\\
    &> \left[d(t,v) + d(s,u)\right]\left(\frac{2\sin (\epsilon)}{\sin (2\epsilon)}\right)
     - \left[ d(t,v) + d(s,u) \right] > 0  
  \end{align*}
  The constraint that the difference is strictly positive follows
  directly from Equations (\ref{eq:triangle2}) and
  (\ref{eq:triangle3}). Hence,
    \begin{align*}
      f(x) - f(y) > \left[d(t,v) + d(s,u)\right]\left(\frac{2 \sin (\epsilon)}{\sin (2 \epsilon)} - 1\right)\\
      \geq 2 d_{min} \left(\frac{2 \sin (\epsilon)}{\sin (2
          \epsilon)} - 1\right)
       = 2 d_{min} \left(\frac{1 - \cos(\epsilon)}{\cos(\epsilon)}\right). &\qedhere
    \end{align*}
\end{proof}

The lower bound on the angle between any three points in $V$ provides
a constraint on how small the change in fitness between neighboring
inversions can be. This lower bound is useful in the case of a
\emph{quantized} point-set. That is, when the points can be embedded
on an $m \times m$ grid as illustrated in
Figure~\ref{fig:quantization}.

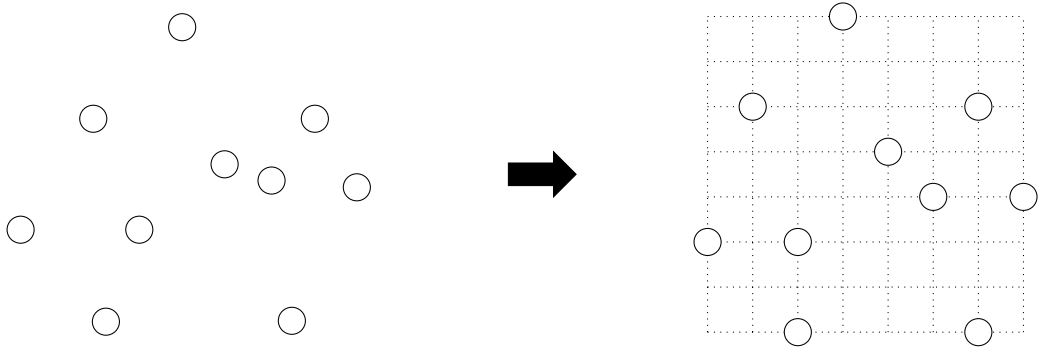
\begin{figure}
  \centering
  \begin{tikzpicture}
    \node (unquantized) at (0,0) {%
      \begin{tikzpicture}[scale=0.6]
        \def\xa{-0.25}\def\ya{0.27}
        \def\xb{0.36}\def\yb{-0.27}
        \def\xc{-0.36}\def\yc{0.23}
        \def\xd{0.38}\def\yd{0.27}
        \def\xe{0.33}\def\ye{-0.24}
        \def\xf{0.27}\def\yf{-0.28}
        \def\xg{0.31}\def\yg{0.36}
        \def\xh{-0.24}\def\yh{0.25}
        \def\xi{0.27}\def\yi{-0.27}
        \def\xj{0.20}\def\yj{0.21}
   
        \draw[fill=white] (0+\xa,2+\ya) circle (0.3cm);
        \draw[fill=white] (1+\xb,5+\yb) circle (0.3cm);
        \draw[fill=white] (2+\xc,0+\yc) circle (0.3cm);
        \draw[fill=white] (2+\xd,2+\yd) circle (0.3cm);
        \draw[fill=white] (3+\xe,7+\ye) circle (0.3cm);
        \draw[fill=white] (4+\xf,4+\yf) circle (0.3cm);
        \draw[fill=white] (5+\xg,3+\yg) circle (0.3cm);
        \draw[fill=white] (6+\xh,0+\yh) circle (0.3cm);   
        \draw[fill=white] (6+\xi,5+\yi) circle (0.3cm);
        \draw[fill=white] (7+\xj,3+\yj) circle (0.3cm);
      \end{tikzpicture}
    };
    \node (arrow) at (4.7,0) {%
      \begin{tikzpicture}[scale=0.3]
        \draw[fill=black] (0,0) -- (0,1) -- (2,1) -- (2,1.5) -- (3,0.5) --
        (2,-0.5) -- (2,0) -- cycle;
      \end{tikzpicture}
    };
    \node (quantized) at (9,0){%
      \begin{tikzpicture}[scale=0.6]
        \draw[step=1cm,dotted] (0,0) grid (7,7);
        \draw[fill=white] (0,2) circle (0.3cm);
        \draw[fill=white] (1,5) circle (0.3cm);
        \draw[fill=white] (2,0) circle (0.3cm);
        \draw[fill=white] (2,2) circle (0.3cm);
        \draw[fill=white] (3,7) circle (0.3cm);
        \draw[fill=white] (4,4) circle (0.3cm);
        \draw[fill=white] (5,3) circle (0.3cm);
        \draw[fill=white] (6,0) circle (0.3cm);   
        \draw[fill=white] (6,5) circle (0.3cm);
        \draw[fill=white] (7,3) circle (0.3cm);
      \end{tikzpicture}
    };
  \end{tikzpicture}

  \caption{\label{fig:quantization} Quantizing a set of $n$ points in
    the plane onto an $m \times m$ grid.}
\end{figure}

Quantization, for example, occurs when the $x$ and $y$ coordinates of
each point in the set are rounded to the nearest value in a set of $m$
equidistant values (e.g., integers). We point out that it is still
important that the quantization preserves the constraint on
collinearity since collinear points violate a nonzero angle bound. We
have the following lemma.

\begin{lemma}
  \label{lem:grid-angle-bound}
  Suppose $V$ is a set of points that lie on an $m \times m$ unit
  grid, no three collinear. Then $V$ is angle-bounded by $\arctan
  \left(1/(2(m-2)^2)\right)$.
\end{lemma}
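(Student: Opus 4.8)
The plan is to show that for every non-collinear triple $u,v,w \in V$ the angle $\theta$ formed at $v$ stays bounded away from $0$ and from $\pi$, with threshold exactly $\epsilon = \arctan(1/(2(m-2)^2))$. After rescaling so that the grid points have integer coordinates in $\{0,1,\dots,m-1\}$, set $a = u-v$ and $b = w-v$. These are integer vectors with coordinates in $\{-(m-1),\dots,m-1\}$, and since no three points of $V$ are collinear, $a$ and $b$ are not parallel, so the cross product $D = a_1 b_2 - a_2 b_1$ is a nonzero integer and hence $|D| \ge 1$.

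Next I would use the identities $\sin\theta = |D|/(|a|\,|b|)$ and $\cos\theta = (a\cdot b)/(|a|\,|b|)$. Writing $\theta' = \min(\theta,\pi-\theta) \in (0,\pi/2]$, we have $\cos\theta' = |a\cdot b|/(|a|\,|b|)$ and hence $\cot\theta' = |a\cdot b|/|D|$. Since $\sin(\pi-\theta) = \sin\theta$, both endpoint conditions follow from a single inequality: if $|a\cdot b| \le 2(m-2)^2\,|D|$ then $\cot\theta' \le 2(m-2)^2$, so $\theta' \ge \arctan\!\bigl(1/(2(m-2)^2)\bigr) = \epsilon$, and therefore $\epsilon \le \theta \le \pi-\epsilon$. (Equality $\theta = \epsilon$ occurs only for a single extremal configuration identified below.)

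It remains to establish $|a\cdot b| \le 2(m-2)^2\,|D|$. When $|D| \ge 2$, the trivial estimate $|a\cdot b| \le |a_1||b_1| + |a_2||b_2| \le 2(m-1)^2$ already suffices, because $2(m-1)^2 \le 4(m-2)^2 \le 2(m-2)^2\,|D|$ once $m$ is at least a small absolute constant (the finitely many smaller grids being checked by hand). The substantive case is $|D| = 1$, where $a$ and $b$ form a unimodular basis of $\Za^2$; in particular each is primitive, so neither equals $(\pm(m-1),\pm(m-1))$. Using the eightfold symmetry of $\Za^2$ I would reduce to $a,b$ in the closed first quadrant with $a\cdot b \ge 0$, and then repeatedly subtract the shorter vector from the longer (a Euclidean-algorithm / continued-fraction reduction that preserves $|D|$ while changing $a\cdot b$ in a controlled way) to cut down to a short list of base configurations; the extremal one is $\{(m-1,m-2),(m-2,m-3)\}$, obtained as a translate of a triple reaching opposite corners of the grid, for which $a\cdot b = 2(m-2)^2$ exactly.

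The main obstacle is exactly this last step: the crude coordinate bound only delivers the denominator $2(m-1)^2$, and even the sharper observation that a primitive grid vector has squared length at most $(m-1)^2 + (m-2)^2$ is insufficient, since that value already exceeds $2(m-2)^2$. What rescues the sharp constant is that a unimodular pair of grid vectors cannot be simultaneously long and nearly parallel unless their common direction is forced into the near-diagonal configuration above; quantifying this ``two near-collinear lattice points cannot both reach the far corner of the grid'' phenomenon is the one part of the argument that is not routine. It is also worth noting where the collinearity hypothesis enters: it is precisely what guarantees $D \ne 0$, so that the bound is meaningful and $\theta$ is genuinely bounded away from $0$ and $\pi$.
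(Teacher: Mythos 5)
Your setup is essentially the paper's own argument in vector language: the paper writes the two slopes as ratios of integers at most $m-1$ and uses $\theta=\arctan(a/b)-\arctan(c/d)=\arctan\bigl((ad-cb)/(bd+ac)\bigr)$, whose numerator and denominator are exactly your cross product $|D|$ and dot product, so your identity $\cot\theta'=|a\cdot b|/|D|$ is the same reduction. The genuine gap is the step you yourself flag: in the case $|D|=1$ you never establish $|a\cdot b|\le 2(m-2)^2$ for a unimodular pair of grid vectors; you sketch a Euclidean/Stern--Brocot descent to a ``short list of base configurations'' and then concede that quantifying why a unimodular pair cannot be simultaneously long and nearly parallel except in the near-diagonal configuration is not routine. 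That inequality is the entire content of the lemma's constant --- the trig identities and the $|D|\ge 2$ case (fine for $m\ge 5$) are bookkeeping --- so as written the proof is incomplete at its decisive point. The delicacy is real: on a $3\times 3$ grid the points $(0,0)$, $(1,1)$, $(2,1)$ are non-collinear and the angle at the origin is $\arctan(1/3)<\arctan(1/2)=\arctan\bigl(1/(2(m-2)^2)\bigr)$, so the claimed extremal value (and indeed the lemma as stated) fails for $m=3$; any completed descent must invoke $m\ge 4$ somewhere, and for $m\ge 4$ equality is attained at the pair $\{(m-1,m-2),(m-2,m-3)\}$, so the strict inequalities in the definition of angle-boundedness are themselves borderline (you note this; the paper does not).

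In fairness, the paper does not close this gap either: it simply asserts that the minimum positive value of $(ad-cb)/(bd+ac)$ over the integers from $0$ to $m-1$ is $1/(2(m-2)^2)$, with no justification, so your proposal stalls at exactly the point the published proof glosses over. Note also that the crude estimate you already have, $|a\cdot b|\le 2(m-1)^2$ together with $|D|\ge 1$, yields unconditionally that $V$ is angle-bounded by $\arctan\bigl(1/(2(m-1)^2)\bigr)$, and this weaker constant is all the downstream results use (Lemma~\ref{lem:quantized} only needs $\cos(\epsilon)/(1-\cos(\epsilon))=O(m^4)$). The unproved lattice claim therefore buys only the sharp $(m-2)$ in the statement; to prove the lemma exactly as stated you still owe a complete argument for the extremal pair, e.g.\ by fixing a primitive $a$ in the box, observing that the admissible $b$ with $a\wedge b=\pm 1$ lie on two lattice lines parallel to $a$, maximizing $a\cdot b$ along those lines inside the box, and then maximizing over $a$, all under the restriction $m\ge 4$.
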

\begin{proof}
  The grid imposes a coordinate system on $V$ in which the concept of
  line slope is well-defined.  Let $u,v,w \in V$ be arbitrary points.
  We consider the angle $\theta$ at point $v$ formed by the lines from
  $v$ to $u$ and $v$ to $w$. Let $s_1$ and $s_2$ denote the slope of
  these lines, respectively. If the slopes are of opposite sign, then
  $\theta \geq 2\arctan((m-1)^{-1})$ since the lines form hypotenuses
  of two right triangles with adjacent sides of length at most $m-1$
  and opposite sides with length at least $1$ (see
  Figure~\ref{fig:bisect}).
  
  We now consider the case where the slopes are nonnegative. The
  nonpositive case is handled identically (or by simply changing the
  sign of the slopes by the appropriate transformation). Without loss
  of generality, assume $s_1 > s_2 \geq 0$. Equality is impossible
  since $u$, $v$, and $w$ cannot be collinear. Since the points lie on
  an $m \times m$ grid, $s_1$ and $s_2$ must be ratios of whole
  numbers at most $m-1$, say $s_1 = a/b$ and $s_2 = c/d$. The angle at
  point $v$ is
  $
  \theta = \arctan(a/b) - \arctan(c/d) = 
  \arctan \left( \frac{ad - cb}{bd + ac} \right).
  $
  The minimum positive value for the expression $(ad - cb)/(bd + ac)$
  over the integers from $0$ to $m-1$ is $\frac{1}{2(m-2)^2}$.  Since
  the inverse of the tangent is monotone, the minimum nonzero angle
  must be $\theta \geq \arctan \left(1/(2(m-2)^2) \right)$.
\end{proof}

Lemma~\ref{lem:grid-angle-bound} allows us to translate the somewhat
awkward trigonometric expression in the claim of
Lemma~\ref{lem:improvement} (and subsequent lemmas that depend on it)
into a convenient polynomial that can be expressed in terms of $m$.

\begin{lemma}
  \label{lem:quantized}
  Let $V$ be a set of $n$ points that lie on an $m \times m$ unit
  grid, no three collinear. Then, $V$ is angle-bounded by $\epsilon$
  where $\cos(\epsilon)/(1-\cos(\epsilon)) = O(m^4)$.
\end{lemma}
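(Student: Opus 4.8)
The plan is to chain together Lemma~\ref{lem:grid-angle-bound} with an elementary trigonometric estimate. By Lemma~\ref{lem:grid-angle-bound}, the point set $V$ is angle-bounded by $\epsilon = \arctan\!\left(1/(2(m-2)^2)\right)$, so it suffices to show that for this particular value of $\epsilon$ we have $\cos(\epsilon)/(1-\cos(\epsilon)) = O(m^4)$. First I would set $t = \tan(\epsilon) = 1/(2(m-2)^2)$, which is a small positive quantity for large $m$. The goal is then purely to bound $\cos(\epsilon)/(1-\cos(\epsilon))$ from above in terms of $t$ (equivalently, in terms of $m$).

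The key step is to rewrite the target expression in a form where the $\arctan$ unwinds cleanly. Using $\cos(\epsilon) = 1/\sqrt{1+t^2}$, we get
\[
\frac{\cos(\epsilon)}{1-\cos(\epsilon)} = \frac{1/\sqrt{1+t^2}}{1 - 1/\sqrt{1+t^2}} = \frac{1}{\sqrt{1+t^2} - 1}.
\]
Now rationalizing, $\sqrt{1+t^2}-1 = \dfrac{t^2}{\sqrt{1+t^2}+1}$, so
\[
\frac{\cos(\epsilon)}{1-\cos(\epsilon)} = \frac{\sqrt{1+t^2}+1}{t^2} \le \frac{3}{t^2}
\]
for all $t \le 1$ (in fact $\sqrt{1+t^2}+1 \le 3$ whenever $t\le 1$, which holds for $m \ge 3$). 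Substituting $t = 1/(2(m-2)^2)$ gives $1/t^2 = 4(m-2)^4$, hence $\cos(\epsilon)/(1-\cos(\epsilon)) \le 12(m-2)^4 = O(m^4)$, as desired. For the finitely many small values of $m$ not covered by the $t \le 1$ assumption, the bound holds trivially since the quantity is a constant.

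I do not anticipate a genuine obstacle here; the only mild subtlety is being careful that the identity $\cos(\arctan t) = 1/\sqrt{1+t^2}$ is applied on the correct branch (which is fine since $\epsilon \in (0,\pi/2)$), and noting that the lemma only asserts an $O(m^4)$ bound, so constants and small-$m$ corner cases are immaterial. One could equally well argue via the half-angle-type estimate $1-\cos(\epsilon) = \Theta(\epsilon^2)$ together with $\epsilon = \arctan(1/(2(m-2)^2)) = \Theta(m^{-2})$, but the algebraic route above avoids any asymptotic bookkeeping on $\arctan$ and yields an explicit constant.
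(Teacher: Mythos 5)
Your proposal is correct and follows essentially the same route as the paper: invoke Lemma~\ref{lem:grid-angle-bound} to get $\epsilon = \arctan\left(1/(2(m-2)^2)\right)$, apply $\cos(\arctan x)=1/\sqrt{1+x^2}$, and reduce the claim to an elementary algebraic estimate. The only difference is that you rationalize $\sqrt{1+t^2}-1$ explicitly and obtain the concrete constant $12(m-2)^4$, whereas the paper simply asserts the asymptotic fact $z/(\sqrt{1+z^2}-z)=O(z^2)$ with $z=2(m-2)^2$.
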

\begin{proof}
  It follows from Lemma~\ref{lem:grid-angle-bound} that the angle
  bound on $V$ is $\epsilon = \arctan \left( 1/(2(m-2)^2) \right)$.
  Since $\cos (\arctan (x)) = 1/\sqrt{1+x^2}$ we have
  \[
  \frac{\cos(\epsilon)}{1-\cos(\epsilon)} = \frac{2(m-2)^2}{\sqrt{1+4(m-2)^4} - 2(m-2)^2}.
  \]
  and since $z/(\sqrt{1+z^2} - z) = O(z^2)$ , setting $z = 2(m-2)^2$
  completes the proof.
\end{proof}

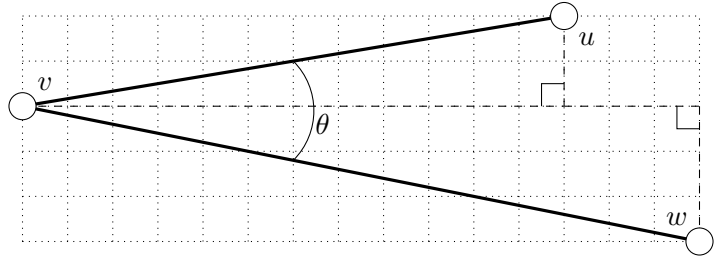
\begin{figure}
  \centering
  \begin{tikzpicture}[scale=0.6]
    \draw[step=1cm,dotted] (0,0) grid (15,5);

    \draw[very thick] (0,3) edge (12,5); 
    \draw[very thick] (0,3) edge (15,0); 

    \draw[dashed] (0,3) edge (12,3);
    \draw[dashed] (12,3) edge (12,5);
    \draw (11.5,3) -- (11.5,3.5) -- (12,3.5);

    \draw[dashed] (12,3) edge (15,3);
    \draw[dashed] (15,3) edge (15,0);
    \draw (14.5,3) -- (14.5,2.5) -- (15,2.5);

    \draw[fill=white] (0,3) circle (0.3cm);
    \draw[fill=white] (12,5) circle (0.3cm);
    \draw[fill=white] (15,0) circle (0.3cm);

    \def\x{6}    
    \def\y{4}    
    \def\z{1.8}  
    \def\r{1.555635}   
    \draw (\x,\y) arc (45:-45:\r);

    \node[draw=none] at (6.65,2.6) {$\theta$};
    \node[draw=none] at (0.5,3.5) {$v$};
    \node[draw=none] at (12.5,4.5) {$u$};
    \node[draw=none] at (14.5,0.5) {$w$};
  \end{tikzpicture}
  
  \caption{\label{fig:bisect} If the slope of the lines from $v$ to
    $u$ and $v$ to $w$ are of opposite sign, they form the hypotenuses
    of two right triangles and $\theta \geq 2\arctan((m-1)^{-1})$.}
\end{figure}

We are now ready to prove the following technical lemma for 2-opt
mutation defined in Function~\ref{fun:2-opt-mutation}. This lemma will
be instrumental in proving runtime bounds later in the paper.
\begin{lemma}
  \label{lem:2-opt-mutation}
  Let $V$ be a set of planar points in convex position angle-bounded
  by $\epsilon$. We have the following.
  \begin{enumerate}
  \item[(1)] For any $x \in S_{\alpha}$, the probability that 2-opt
    mutation creates an offspring $y$ with $f(y) < f(x) - 2 d_{min}
    \left(1-\cos(\epsilon)\right)/\left(\cos(\epsilon)\right)$
    is at least $2/(\e n(n-1))$.
  \item[(2)] For any $x \in S_{\beta}$, the probability that 2-opt mutation
    creates an optimal solution is at least $(\e n^{4k} (2k-1)!)^{-1}$
  \end{enumerate}
\end{lemma}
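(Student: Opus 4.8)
The plan is to prove each part by isolating one ``lucky'' outcome of \texttt{2-opt-mutation} and lower-bounding its probability, leaning on the structural lemmas already established. The facts I will use about the operator are: the number of inversions it performs is $s+1$ with $s$ Poisson-distributed with parameter $1$, so $\Pr[s=\ell-1] = \e^{-1}/(\ell-1)!$; the $s+1$ inversions are chosen independently; and each individual inversion is uniform over the $\binom{n}{2}$ possible index pairs, so a fixed inversion is performed at a given step with probability $1/\binom{n}{2} = 2/(n(n-1))$.

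For part~(1), take $x \in S_\alpha$, so $C(x)$ is not intersection-free. Lemma~\ref{lem:intersection} supplies an inversion $\inv{i}{j}$ that deletes an intersecting pair of edges and introduces a non-intersecting pair, and Lemma~\ref{lem:improvement} (applicable since $V$ is angle-bounded by $\epsilon$) then guarantees that $y = \inv{i}{j}[x]$ satisfies $f(x)-f(y) > 2d_{min}(1-\cos\epsilon)/\cos\epsilon$ --- exactly the gap in the claim. The lucky event is that the Poisson draw yields $s=0$ and the resulting single inversion equals $\inv{i}{j}$; by independence this occurs with probability at least $\e^{-1}\cdot 2/(n(n-1)) = 2/(\e n(n-1))$, which proves~(1).

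For part~(2), take $x \in S_\beta$, so $C(x)$ is intersection-free and non-optimal. By Lemma~\ref{lem:inversions-to-solve-intersection-free} there is a sequence of inversions $\sigma_1,\dots,\sigma_\ell$ with $1 \le \ell \le 2k$ (where $k = |V \setminus \hull{V}|$) whose composition sends $x$ to an optimal permutation; here $\ell \ge 1$ since $x$ is not already optimal. The lucky event is that \texttt{2-opt-mutation} performs exactly this sequence: the Poisson draw gives $s = \ell-1$ and, step by step, its $t$-th inversion equals $\sigma_t$. By independence this event has probability at least $\frac{\e^{-1}}{(\ell-1)!}\bigl(\frac{2}{n(n-1)}\bigr)^{\ell}$. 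I would then note this expression is non-increasing in $\ell$ for $\ell \ge 1$ (the ratio of consecutive terms is $\frac{1}{\ell}\cdot\frac{2}{n(n-1)} \le 1$), so over $1 \le \ell \le 2k$ it is minimized at $\ell = 2k$, giving the bound $\frac{\e^{-1}}{(2k-1)!}\bigl(\frac{2}{n(n-1)}\bigr)^{2k}$. Finally $\frac{2}{n(n-1)} \ge \frac{1}{n^2}$ (equivalently $2n \ge n-1$) gives $\bigl(\frac{2}{n(n-1)}\bigr)^{2k} \ge n^{-4k}$, so the probability is at least $\frac{\e^{-1}}{(2k-1)!\,n^{4k}} = (\e n^{4k}(2k-1)!)^{-1}$, as claimed.

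Essentially all the real content lives in the structural lemmas (Lemma~\ref{lem:improvement} for the fitness gap, Lemma~\ref{lem:inversions-to-solve-intersection-free} for the bound on the number of inversions needed), so the only delicate point is the probabilistic bookkeeping in part~(2): one must check that conditioning on $s=\ell-1$ and then matching the fixed inversion sequence are genuinely independent events, and that substituting the worst-case length $2k$ for the true $\ell$ can only weaken the bound. That monotonicity step --- confirming the minimum of $\frac{\e^{-1}}{(\ell-1)!}(2/(n(n-1)))^{\ell}$ is attained at $\ell=2k$ rather than some intermediate value --- is what I would write out most carefully; everything else is routine.
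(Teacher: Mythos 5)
Your proposal is correct and follows essentially the same route as the paper: isolate the lucky event that the Poisson draw gives the right number of inversions and that each uniformly chosen inversion matches the prescribed move, using Lemmas~\ref{lem:intersection} and~\ref{lem:improvement} for part~(1) and Lemma~\ref{lem:inversions-to-solve-intersection-free} for part~(2). Your only deviation is a minor refinement: you condition on the true sequence length $\ell \le 2k$ and argue monotonicity in $\ell$, whereas the paper simply conditions on exactly $2k$ inversions (implicitly padding shorter sequences), and both yield the stated bound $(\e n^{4k}(2k-1)!)^{-1}$.
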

\begin{proof}
  For (1), suppose $x \in S_{\alpha}$.  By
  Lemma~\ref{lem:intersection}, there is at least one pair of
  intersecting edges in $C(x)$ that can be removed with a single
  inversion operation $\inv{i}{j}$.  Let $y = \inv{i}{j}[x]$. By
  Lemma~\ref{lem:improvement}, $f(y)$ satisfies the fitness bound in
  the claim. It suffices to bound the probability that $y$ is produced
  by \FuncSty{2-opt-mutation($x$)}.

  Let $E_1$ denote the event that Poisson mutation performs exactly
  one inversion (i.e., $s = 0$ in line~\ref{li:Poisson} of
  Function~\ref{fun:2-opt-mutation}).  Let $E_2$ denote the event that
  the pair $(i,j)$ is specifically chosen for the inversion. 

  We have $\Pr\{E_1\} = \e^{-1}$ from the Poisson density function and
  $\Pr\{E_2|E_1\} \geq (n(n-1)/2)^{-1}$. Thus the probability that
  2-opt mutation creates $y$ from $x$ is
  \[
  \Pr\{E_1 \cap E_2\} = \Pr\{E_1\}\cdot\Pr\{E_2|E_1\} \geq 2/(\e n(n-1)).
  \]

  For (2), suppose $x \in S_{\beta}$. Thus, $C(x)$ is
  intersection-free, and it follows from
  Lemma~\ref{lem:inversions-to-solve-intersection-free} that there are
  at most $2k$ inversion moves that transform $x$ into an optimal
  solution.

  Let $E'_1$ denote the event that Poisson mutation performs exactly
  $2k$ inversions (i.e., $s = 2k -1$ in line~\ref{li:Poisson} of
  Function~\ref{fun:2-opt-mutation}).  Let $E'_2$ denote the event that
  all $2k$ inversions are the correct moves that transform $x$
  into an optimal solution. 

  Again, from the Poisson density function, $\Pr\{E'_1\} =
  (\e(2k-1)!)^{-1}$. Since $\Pr\{E'_2|E'_1\} \geq (n(n-1)/2)^{-2k} \geq
  n^{-4k}$, the probability of transforming $x$ into an optimal
  solution is at least
  \[
  \Pr\{E'_1 \cap E'_2\} =
  \Pr\{E'_1\}\cdot\Pr\{E'_2|E'_1\} \geq (\e n^{4k}(2k-1)!)^{-1}.
  \]
\end{proof}

Finally, since the time bounds in the remainder of this paper are
expressed as a function of the angle bound $\epsilon$, and the bounds
on point distance, it will be useful to define the following function.

\begin{definition}\label{def:fn}
  Let $V$ be a set of points angle-bounded by $\epsilon$. We define
  \[  
  \fn(\epsilon) = \left(\frac{d_{max}}{d_{min}} -
    1\right)\left(\frac{\cos(\epsilon)}{1-\cos(\epsilon)}\right)
  \]
  where $d_{max}$ and $d_{min}$ respectively denote the maximum and
  minimum Euclidean distance between points in $V$.
\end{definition}

\section{Instances in convex position}\label{sec:inst-conv-posit}
A finite point set $V$ is in \emph{convex position} when every point
in $V$ is a vertex on its convex hull. De{\u{\i}}neko et
al.~\cite{Deineko2006inner} observed that the Euclidean TSP is easy to
solve when $V$ is in convex position.  In this case, the optimal
permutation is any linear ordering of the points which respects the
ordering of the points on the convex hull. Such an ordering can be
found in time $O(n \log n)$~\cite{deBerg2008computationalgeometry}.

In the context of evolutionary algorithms, the natural question
arises, if $V$ is in convex position, how easy is it for simple
randomized search heuristics?  In this case, a tour is
intersection-free if and only if it is globally optimal, hence finding
an optimal solution is exactly as hard as finding an intersection-free
tour. Since an intersection can be (at least temporarily) removed from
a tour by an inversion operation (c.f. Lemma~\ref{lem:intersection}), we
focus in this section on algorithms that use the inversion operation.

\subsection{RLS}\label{sec:rls}
RLS operates on a single candidate solution by performing a single
inversion in each iteration. Since each inversion which removes an
intersection results in a permutation whose fitness is improved by the
amount bounded in Lemma~\ref{lem:improvement}, it is now
straightforward to bound the time it takes for RLS to discover a
permutation that corresponds to an intersection-free tour. 

\begin{theorem}
  \label{thm:convex-position-rls}
  Let $V$ be a set of planar points in convex position angle-bounded
  by $\epsilon$. The expected time for RLS to solve the TSP on $V$ is
  $O(n^3 \fn(\epsilon))$ where $\fn$ is defined in
  Definition~\ref{def:fn}.
 \end{theorem}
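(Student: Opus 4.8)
The plan is to combine the single-move improvement bound of Lemma~\ref{lem:improvement} with the additive-drift estimate of Lemma~\ref{lem:alpha}. The key simplification in convex position is that a tour is intersection-free if and only if it is optimal, so the partition class $S_{\beta}$ of non-optimal intersection-free tours is empty. Consequently $\beta(x^{(t)}) = 0$ for all $t$, and by~(\ref{eq:alphabeta}) the optimization time reduces to $T = \sum_{t=1}^{\infty} \alpha(x^{(t)})$. Thus it suffices to exhibit constants $p$ and $c$ satisfying the hypothesis of Lemma~\ref{lem:alpha} for RLS and then read off the bound.

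First I would fix the improvement constant $c = 2 d_{min}\bigl(1 - \cos(\epsilon)\bigr)/\cos(\epsilon)$. For any $x \in S_{\alpha}$ the tour $C(x)$ is not intersection-free, so Lemma~\ref{lem:intersection} supplies an inversion $\inv{i}{j}$ that replaces a pair of intersecting edges of $C(x)$ with a pair of non-intersecting ones, and Lemma~\ref{lem:improvement} then guarantees $f(\inv{i}{j}[x]) < f(x) - c$. Next I would bound the relevant transition probability: in one iteration RLS chooses a pair of distinct indices uniformly among the $\binom{n}{2}$ possibilities and performs the corresponding inversion, accepting the result whenever the fitness does not increase. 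The particular pair $(i,j)$ above is selected with probability $2/(n(n-1))$, and since the move strictly decreases fitness it is accepted. Hence the hypothesis of Lemma~\ref{lem:alpha} holds with $p = 2/(n(n-1))$ and the stated $c$; as a byproduct this also shows that in convex position RLS has no non-optimal local optima, so $E(T)$ is finite.

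Finally I would assemble the pieces. Lemma~\ref{lem:alpha} gives $E(T) = \E\bigl(\sum_{t=1}^{\infty}\alpha(x^{(t)})\bigr) \leq p^{-1} n(d_{max} - d_{min})/c = \tfrac{n(n-1)}{2}\cdot\tfrac{n(d_{max}-d_{min})\cos(\epsilon)}{2 d_{min}(1-\cos(\epsilon))} = \tfrac{n^2(n-1)}{4}\bigl(\tfrac{d_{max}}{d_{min}} - 1\bigr)\tfrac{\cos(\epsilon)}{1-\cos(\epsilon)}$, which equals $\tfrac{n^2(n-1)}{4}\,\fn(\epsilon) = O(n^3\fn(\epsilon))$ by Definition~\ref{def:fn}. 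I do not expect a genuine obstacle here: all of the structural content is already carried by Lemmas~\ref{lem:intersection} and~\ref{lem:improvement}, and the only point that needs a little care is the bookkeeping that identifies $T$ with $\sum_t \alpha(x^{(t)})$ via the fact that $S_{\beta} = \emptyset$ when $V$ is in convex position.
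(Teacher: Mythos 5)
Your proposal is correct and follows essentially the same route as the paper's own proof: empty $S_{\beta}$ in convex position, Lemmas~\ref{lem:intersection} and~\ref{lem:improvement} to supply the improving inversion and the constant $c$, the selection probability $p = 2/(n(n-1))$, and Lemma~\ref{lem:alpha} to conclude. The only difference is that you carry out the final arithmetic identifying the bound with $\fn(\epsilon)$ explicitly, which the paper leaves implicit.
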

  \begin{proof}
    Consider the infinite stochastic process generated by
    Algorithm~\ref{alg:rls} $(x^{(1)},x^{(2)},\ldots)$.  It suffices
    to bound the expectation of the random variable $T$ defined in
    Equation~(\ref{eq:T}).  Since $V$ is in convex position, any
    intersection-free tour is globally optimal. Thus, in this case
    $S_{\beta} = \emptyset$, and Equation (\ref{eq:alphabeta}) can be
    written as
    \[
    T = \sum_{t=1}^\infty \alpha(x^{(t)}).
    \]    
    Consider an arbitrary permutation $x \in S_{\alpha}$. Since $C(x)$
    must contain intersections, by Lemma~\ref{lem:intersection}, there
    is an inversion $\inv{i}{j}$ which removes a pair of intersecting
    edges and replaces them with a pair of non-intersecting edges.
    Moreover, by Lemma~\ref{lem:improvement}, such an inversion
    results in an improvement of at least
    \begin{equation}
      2 d_{min} \left(1-\cos(\epsilon)\right)/\left(\cos(\epsilon)\right).\label{eq:rls-c}    
    \end{equation}
    If the state $x$ is visited by RLS, the probability that this
    particular inversion is selected uniformly at random is
    $2/(n(n-1))$. Thus we have the conditions of Lemma~\ref{lem:alpha}
    with $p = 2/(n(n-1))$ and $c$ equal to the expression
    in~(\ref{eq:rls-c}) and the claimed bound on the expectation of
    $T$ follows.
\end{proof}

\begin{corollary}[to Theorem~\ref{thm:convex-position-rls}]
  \label{cor:convex-position-rls}
  If $V$ is in convex position and embedded in an $m \times m$ grid
  with no three points collinear, then RLS solves the TSP on $V$ in
  expected time $O(n^3 m^5)$.
\end{corollary}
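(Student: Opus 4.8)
The plan is to invoke Theorem~\ref{thm:convex-position-rls} directly and then bound the quantity $\fn(\epsilon)$ appearing in its statement by $O(m^5)$ under the grid hypothesis. Since $V$ lies on an $m \times m$ unit grid with no three points collinear, Lemma~\ref{lem:grid-angle-bound} tells us that $V$ is angle-bounded by $\epsilon = \arctan\left(1/(2(m-2)^2)\right)$, so Theorem~\ref{thm:convex-position-rls} applies and yields an expected optimization time of $O(n^3 \fn(\epsilon))$. It therefore remains only to estimate the two factors in $\fn(\epsilon) = (d_{max}/d_{min} - 1)(\cos(\epsilon)/(1-\cos(\epsilon)))$ in terms of $m$.

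For the distance ratio, I would observe that any two distinct points on the unit grid are separated by Euclidean distance at least $1$, so $d_{min} \geq 1$, while the largest possible separation is at most the length of the grid diagonal, $d_{max} \leq \sqrt{2}\,(m-1)$. Hence $d_{max}/d_{min} - 1 = O(m)$. For the trigonometric factor, I would simply cite Lemma~\ref{lem:quantized}, which establishes $\cos(\epsilon)/(1-\cos(\epsilon)) = O(m^4)$ for precisely a point set on an $m \times m$ unit grid with no three collinear. Multiplying the two estimates gives $\fn(\epsilon) = O(m)\cdot O(m^4) = O(m^5)$, and substituting this into the bound from Theorem~\ref{thm:convex-position-rls} yields the claimed $O(n^3 m^5)$.

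There is essentially no obstacle here: the substantive work of translating the awkward angle bound into a polynomial in $m$ has already been carried out in Lemmas~\ref{lem:grid-angle-bound} and~\ref{lem:quantized}, and the only remaining ingredient --- bounding $d_{max}/d_{min}$ --- is a one-line geometric observation about the grid. The only point requiring a moment of care is checking that the collinearity constraint built into the grid embedding is consistent with the nonzero angle bound demanded by the earlier lemmas, but this is already part of the hypothesis, so the proof is a short chain of substitutions.
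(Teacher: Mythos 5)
Your proposal is correct and follows essentially the same route as the paper: invoke Theorem~\ref{thm:convex-position-rls}, obtain the angle bound from Lemma~\ref{lem:grid-angle-bound}, bound $\cos(\epsilon)/(1-\cos(\epsilon)) = O(m^4)$ via Lemma~\ref{lem:quantized}, and combine with $d_{min} \geq 1$, $d_{max} \leq \sqrt{2}(m-1)$ to get $\fn(\epsilon) = O(m^5)$.
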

\begin{proof}
  The bound follows immediately from
  Theorem~\ref{thm:convex-position-rls} since $V$ is in convex
  position and since, by Lemma~\ref{lem:grid-angle-bound}, $V$ is
  angle-bounded by $\arctan \left( 1/(2(m-2)^2) \right)$ , $d_{max} =
  (m-1)\sqrt{2}$, and $d_{min} = 1$. Appealing to
  Lemma~\ref{lem:quantized} yields $\cos(\epsilon)/(1-\cos(\epsilon))
  = O(m^4)$. Substituting these terms into bound of
  Theorem~\ref{thm:convex-position-rls} completes the proof.
\end{proof}

\subsection{The $\boldsymbol{(\mu+\lambda)}$~EA}\label{sec:ea}
We now consider the optimization time of the $(\mu+\lambda)$~EA using
2-opt mutation defined in Algorithm~\ref{alg:ea} applied to a set of
points in convex position. Obviously, in this case we will find $\mu$
and $\lambda$ terms appearing in the runtime formulas. We will assume
that $\mu$ and $\lambda$ are polynomials in both $n$ and $k$.  We will
also find that setting $\lambda = \Theta(\mu n^2)$ ensures that a
transition from any state $x^{(t)} \in S_{\alpha}$ a state which
improves on the fitness by at least a specified amount occurs with
constant probability.

Such a setting has the effect of reducing the number of
\emph{generations} spent removing intersections from the best-so-far
tours. However, it is important to note that the number of calls to
the fitness function must be accordingly increased by a factor of
$\Theta(\mu n^2)$ in each generation. Nevertheless, the expected
number of generations can be a useful measure when considering
parallel evolutionary algorithms. As Jansen, De Jong and
Wegener~\cite{Jansen2005choice} have pointed out, when the fitness
evaluation of the offspring can be performed on parallel processors,
the number of generations corresponds to the \emph{parallel}
optimization time. In such a case, we would observe a quadratic factor
improvement in the parallel runtime corresponding to the segment spend
in tours with intersections.

The following theorem bounds the number of expected generations the
$(\mu+\lambda)$~EA needs to solve the Euclidean TSP on a set of
angle-bounded points in convex position.

\begin{theorem}
  \label{thm:convex-position-mulambda}
  Let $V$ be a set of planar points in convex position angle-bounded
  by $\epsilon$. The expected time for the $(\mu+\lambda)$~EA using
  2-opt mutation to solve the TSP on $V$ is bounded above by
  $O{\left((\mu/\lambda)\cdot n^3 \fn(\epsilon) +
      n\fn(\epsilon)\right)}$ where $\fn$ is as defined in
  Definition~\ref{def:fn}.
 \end{theorem}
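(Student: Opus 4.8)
The plan is to mirror the proof of Theorem~\ref{thm:convex-position-rls}: reduce to Lemma~\ref{lem:alpha}, but with a transition probability that accounts for the $\lambda$ offspring produced in each generation. Since $V$ is in convex position, a tour is intersection-free if and only if it is optimal, so $S_\beta = \emptyset$; hence Equation~(\ref{eq:alphabeta}) collapses to $T = \sum_{t=1}^\infty \alpha(x^{(t)})$, and it suffices to bound $\E\!\left(\sum_{t=1}^\infty \alpha(x^{(t)})\right)$.

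Next I would fix an arbitrary $x \in S_\alpha$ and lower-bound the probability that, in a generation whose best-so-far individual is $x$, the best-so-far fitness decreases by at least $c := 2 d_{min}\bigl(1-\cos(\epsilon)\bigr)/\cos(\epsilon)$. Because the $(\mu+\lambda)$~EA is elitist, $x$ occurs in the parent multiset $P$, so each of the $\lambda$ offspring is generated from $x$ independently with probability at least $1/\mu$; conditioned on that, Lemma~\ref{lem:2-opt-mutation}(1) gives that \texttt{2-opt-mutation} produces some $y$ with $f(y) < f(x) - c$ with probability at least $2/(\e n(n-1))$. Thus each offspring is independently such an improvement with probability at least $q_0 := 2/(\mu \e n(n-1))$ (improvements originating from other parents only help, but are not needed), and the probability that at least one offspring achieves it is at least
\[
1 - (1-q_0)^\lambda \;\ge\; 1 - \e^{-\lambda q_0} \;\ge\; \frac{\lambda q_0}{1 + \lambda q_0} \;=:\; p ,
\]
using $1 - \e^{-y} \ge y/(1+y)$ for $y \ge 0$. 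When this event occurs, the improving offspring $y$ survives elitist selection, so the new best-so-far satisfies $f(x^{(t+1)}) \le f(y) < f(x) - c$; hence the hypothesis of Lemma~\ref{lem:alpha} holds with this $p$ (and $0 < p < 1$) and this $c$.

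Finally I would substitute into Lemma~\ref{lem:alpha}. We have $p^{-1} = 1 + (\lambda q_0)^{-1} = 1 + \mu \e n(n-1)/(2\lambda) = O\!\bigl(1 + (\mu/\lambda) n^2\bigr)$, while by Definition~\ref{def:fn},
\[
\frac{n(d_{max} - d_{min})}{c} \;=\; \frac{n}{2}\left(\frac{d_{max}}{d_{min}} - 1\right)\frac{\cos(\epsilon)}{1-\cos(\epsilon)} \;=\; \frac{n}{2}\,\fn(\epsilon).
\]
Multiplying, $\E(T) \le p^{-1}\, n(d_{max}-d_{min})/c = O\!\bigl((\mu/\lambda)\cdot n^3\fn(\epsilon) + n\fn(\epsilon)\bigr)$, which is the claim. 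I expect the one genuinely delicate step to be the middle paragraph: converting the single-offspring improvement probability of Lemma~\ref{lem:2-opt-mutation}(1) into a per-generation bound with the correct $\mu/\lambda$ scaling (the $1-(1-q_0)^\lambda \ge \lambda q_0/(1+\lambda q_0)$ estimate is what produces the two additive regimes), and verifying that elitism faithfully transfers an offspring improvement to the best-so-far sequence so that Lemma~\ref{lem:alpha} is legitimately applicable here.
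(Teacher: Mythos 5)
Your proof is correct and follows essentially the same route as the paper's: since $V$ is in convex position $S_{\beta}=\emptyset$ so $T=\sum_{t}\alpha(x^{(t)})$, the per-generation probability of a $c$-improvement is obtained from Lemma~\ref{lem:2-opt-mutation}(1) together with the $1/\mu$ parent-selection probability over $\lambda$ independent offspring, and the bound follows from Lemma~\ref{lem:alpha}. The only (cosmetic) difference is that the paper handles $1-(1-q_0)^{\lambda}$ by a case distinction on whether $\lambda \geq \mu\e n(n-1)/2$, whereas you use the single estimate $1-\e^{-y}\ge y/(1+y)$, which yields the same two additive terms directly.
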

 \begin{proof}
   The sequence of best-so-far permutations generated by
   Algorithm~\ref{alg:ea} is the infinite stochastic process
   $(x^{(1)},x^{(2)},\ldots)$ and we seek the expectation of the
   random variable $T$ defined in Equation~(\ref{eq:T}).  Again, in
   the case of convex position, there are no non-optimal
   intersection-free tours so that $S_{\beta} = \emptyset$ and
   \[
   T = \sum_{t=1}^\infty \alpha(x^{(t)}).
   \]
   As long as $x^{(t)}$ is non-optimal, $C(x^{(t)})$ must contain at
   least one pair of intersecting edges.  Hence, in generation $t$, if
   $x^{(t)}$ is selected for mutation to create one of the $\lambda$
   offspring, then by Lemma~\ref{lem:2-opt-mutation}, 2-opt mutation
   must improve the best-so-far solution by at least $c = 2 d_{min}
   \left(1-\cos(\epsilon)\right)/\left(\cos(\epsilon)\right)$ with
   probability at least $(\e n(n-1)/2)^{-1}$. The probability that at
   least one of the $\lambda$ offspring improves on $x^{(t)}$ by at
   least this amount is
   \[
   p \geq 1 - \left(1 - \frac{1}{\mu\e n(n-1)/2}\right)^\lambda.
   \]
   We now make the following case distinction on $\lambda$.
   \begin{description}
   \item[Case $\lambda \geq \mu\e n (n-1)/ 2$.] For this setting of
     $\lambda$, we have $p \geq 1 - \e^{-1}$, so an intersection is
     removed in each generation with constant probability. Invoking
     Lemma~\ref{lem:alpha}, the expected time to find an
     intersection-free tour is at most $O(n\fn(\epsilon))$.
   \item[Case $\lambda < \mu\e n (n-1)/2$.] Here we have
     \[
     \left(1 - \frac{1}{\mu\e n(n-1)/2}\right)^\lambda \geq 
     1 - \e^{-\lambda/(\mu\e n(n-1)/2)} \geq
     \frac{\lambda}{\mu\e n(n-1)}.
     \]
     The final inequality comes from the fact that $\e^{-x} \leq 1 -
     x/2$ for $0 \leq x \leq 1$. Thus, in this case we have $p \geq
     \lambda/(\mu \e n^2)$.  Applying Lemma~\ref{lem:alpha}, the
     expected time to find an intersection-free tour is at most
     $O((\mu/\lambda)\cdot n^3\fn(\epsilon))$.
   \end{description}
   Thus the expected time to solve the instance is bounded by
   $O(\max\{(\mu/\lambda)\cdot n^3 \fn(\epsilon), n \fn(\epsilon)\})$
   which yields the claimed bound.   
 \end{proof}

 Analogous to the Corollary to Theorem~\ref{thm:convex-position-rls},
 we have the following.

\begin{corollary}[to Theorem~\ref{thm:convex-position-mulambda}]
  \label{cor:convex-position-mulambda}
  If $V$ is in convex position and embedded in an $m \times m$ grid
  with no three points collinear, then RLS solves the TSP on $V$ in
  expected time $O((\mu/\lambda) \cdot n^3 m^5 + n m^5)$.
\end{corollary}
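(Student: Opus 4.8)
The plan is to obtain this as a direct corollary of Theorem~\ref{thm:convex-position-mulambda}, exactly in the spirit of the Corollary to Theorem~\ref{thm:convex-position-rls} (and noting that, as in that corollary, the relevant algorithm here is the $(\mu+\lambda)$~EA using 2-opt mutation). The only work is to bound $\fn(\epsilon)$ for points lying on an $m \times m$ unit grid. First I would invoke Lemma~\ref{lem:grid-angle-bound} to conclude that such a point set is angle-bounded by $\epsilon = \arctan\left(1/(2(m-2)^2)\right)$, using the hypothesis that no three points are collinear — which is needed both to apply that lemma and to guarantee the angle bound is nonzero.

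Next I would estimate the two factors appearing in $\fn(\epsilon)$ separately. For the distance ratio, on an $m \times m$ unit grid the smallest interpoint distance is $d_{min} = 1$ and the largest is $d_{max} = (m-1)\sqrt{2}$ (the grid diagonal), so $d_{max}/d_{min} - 1 = O(m)$. For the trigonometric factor, Lemma~\ref{lem:quantized} gives directly that $\cos(\epsilon)/(1-\cos(\epsilon)) = O(m^4)$ for this choice of $\epsilon$. Multiplying the two estimates yields $\fn(\epsilon) = O(m) \cdot O(m^4) = O(m^5)$.

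Finally I would substitute $\fn(\epsilon) = O(m^5)$ into the bound $O\!\left((\mu/\lambda)\cdot n^3 \fn(\epsilon) + n\fn(\epsilon)\right)$ of Theorem~\ref{thm:convex-position-mulambda}, which immediately produces the claimed $O((\mu/\lambda)\cdot n^3 m^5 + n m^5)$. There is no substantive obstacle — the result is a mechanical specialization of the preceding theorem — so the only points requiring care are correctly identifying $d_{max}$ and $d_{min}$ for the grid and appealing to Lemma~\ref{lem:quantized} rather than re-deriving the $O(m^4)$ trigonometric bound by hand.
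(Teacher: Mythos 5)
Your proposal is correct and follows essentially the same route as the paper: the paper proves the corollary exactly as in the Corollary to Theorem~\ref{thm:convex-position-rls}, invoking Lemma~\ref{lem:grid-angle-bound} for the angle bound, taking $d_{min}=1$ and $d_{max}=(m-1)\sqrt{2}$, applying Lemma~\ref{lem:quantized} to get $\cos(\epsilon)/(1-\cos(\epsilon))=O(m^4)$, and substituting $\fn(\epsilon)=O(m^5)$ into Theorem~\ref{thm:convex-position-mulambda}. Your remark that the statement's mention of RLS should read the $(\mu+\lambda)$~EA (as the $\mu/\lambda$ factor indicates) is also an accurate reading of an apparent typo in the paper.
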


\section{Parameterized runtime analysis}\label{sec:param-runt-analys}
We now turn our attention to TSP instances in which $\hull{V} \neq V$
and express the expected runtime in terms of $n = |V|$, the number of
points, and $k = |V\setminus \hull{V}|$, the number of inner
points.

\subsection{Expected time to find 2-opt local optima}
When $\hull{V} \neq V$, RLS does not necessarily converge to the
global optimum with probability one since it can become trapped in a
local optimum.  However, if the number of inner points is sparse
(i.e., $O(1)$), we can bound the complexity of finding local optima.
In other words, we want to estimate $E(T_{loc})$, defined in
Equation~(\ref{eq:Tloc}).

For RLS, a permutation $x$ is \emph{locally optimum} in the 2-opt
neighborhood if there does not exist an inversion $y = \inv{i}{j}[x]$
such that $f(y) < f(x)$ for any $1 \leq i < j \leq n$. In this case,
RLS cannot make further improvements. Indeed, if no three points are
collinear in $V$, then if $x$ is locally optimal, $C(x)$ must be
intersection-free since, if it were not, Lemma~\ref{lem:intersection}
guarantees an inversion exists that removes an intersection and, by
the quadrangle inequality, the resulting offspring would have
improving fitness. It is important to note that the converse is not
necessarily true. However, we may show the following.

\begin{lemma}
  \label{lem:intersection-free-neighbor}
  Suppose $C(x)$ is intersection-free. Then for any neighboring
  inversion $y = \inv{i}{j}[x]$ with $f(y) < f(x)$, $C(y)$ is also
  intersection-free.
\end{lemma}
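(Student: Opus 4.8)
The plan is to compare $C(y)$ with $C(x)$ edge by edge. By Definition~\ref{def:inversion}, $\inv{i}{j}$ deletes the edges $\{x_{i-1},x_i\}$ and $\{x_j,x_{j+1}\}$ and inserts $e_1 = \{x_{i-1},x_j\}$ and $e_2 = \{x_i,x_{j+1}\}$, so that $C(y) = \bigl(C(x)\setminus\{\{x_{i-1},x_i\},\{x_j,x_{j+1}\}\}\bigr)\cup\{e_1,e_2\}$. Because $C(x)$ is intersection-free, no two retained edges cross; hence any intersection present in $C(y)$ must involve $e_1$ or $e_2$. It therefore suffices to rule out both (a) that $e_1$ and $e_2$ cross each other and (b) that $e_1$ or $e_2$ crosses a retained edge.

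Case (a) is handled by the improving hypothesis. If $e_1$ and $e_2$ crossed, then by Proposition~\ref{prp:intersection} they are the diagonals of the convex quadrilateral on $x_{i-1},x_i,x_j,x_{j+1}$, and the quadrangle inequality (the two diagonals together are strictly longer than either pair of opposite sides) gives $d(x_{i-1},x_j)+d(x_i,x_{j+1}) > d(x_{i-1},x_i)+d(x_j,x_{j+1})$, i.e.\ $f(y)>f(x)$, contradicting $f(y)<f(x)$. Reading the same inequality the other way shows that, among the three perfect matchings of $\{x_{i-1},x_i,x_j,x_{j+1}\}$, the matching $\{e_1,e_2\}$ is strictly shorter than the matching $\{\{x_{i-1},x_i\},\{x_j,x_{j+1}\}\}$ formed by the deleted edges; since that matching is non-crossing, neither of the two is the crossing matching, so the four points are either in convex position with cyclic order $x_{i-1},x_i,x_{j+1},x_j$ (making $e_1$ and $e_2$ one pair of opposite sides and the deleted edges the other) or one of the four lies strictly inside the triangle of the other three.

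The crux is case (b). By symmetry, suppose $e_1 = \{x_{i-1},x_j\}$ crosses a retained edge $g$; note $g$ is not incident to $x_{i-1}$ or $x_j$, since the retained $C(x)$-edges at those vertices are $\{x_{i-2},x_{i-1}\}$ and $\{x_{j-1},x_j\}$, which share an endpoint with $e_1$. Deleting the two removed edges from the simple closed curve $C(x)$ leaves two vertex-disjoint simple arcs --- $\pi_{\mathrm{in}}$ from $x_i$ to $x_j$ and $\pi_{\mathrm{out}}$ from $x_{j+1}$ to $x_{i-1}$ --- which do not cross each other or the deleted edges, and $g$ lies on one of them. The intended argument is a Jordan-curve count on the two triangles $x_{i-1}x_ix_j$ and $x_{i-1}x_jx_{j+1}$: they share the side $e_1$, and their remaining four sides are the deleted edge $\{x_{i-1},x_i\}$, the segment $\overline{x_ix_j}$, the deleted edge $\{x_j,x_{j+1}\}$, and the segment $\overline{x_{i-1}x_{j+1}}$. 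A straight segment crossing the relative interior of $e_1$ enters the union of these two triangles and must leave it through one of these four sides (it cannot re-cross $e_1$) or else terminate at $x_i$ or $x_{j+1}$; leaving through a deleted edge immediately contradicts the intersection-freeness of $C(x)$ (as $g$ is itself a $C(x)$-edge), so $g$ is forced to cross $\overline{x_ix_j}$ or $\overline{x_{i-1}x_{j+1}}$ (or to run from outside $e_1$ into the triangle and end at $x_i$ or $x_{j+1}$), pinning an endpoint of $g$ into the region cut off by one of these chords. The step I expect to be the real obstacle is to turn ``an endpoint of $g$ lies in this cut-off region'' into an actual self-crossing of the simple polygon $C(x)$, exploiting that the arc carrying $g$ joins $x_i$ to $x_j$ (respectively $x_{j+1}$ to $x_{i-1}$) while avoiding the deleted edges and the other arc, together with the configuration of the four points fixed in (a). Executing this cleanly will need a case split on which arc contains $g$, which chord it meets, whether $g$ is incident to $x_i$ or $x_{j+1}$, and on the non-convex positions of the four points.
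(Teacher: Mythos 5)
Your case (a) is correct, and it is in essence the \emph{entire} argument the paper gives: the paper supposes $C(y)$ has an intersection, asserts that the intersecting pair must be the two edges introduced by the inversion, and then contradicts $f(y)<f(x)$ with the quadrangle inequality. The genuine gap in your proposal is exactly case (b), which you openly leave unfinished: you reduce it to showing that an endpoint of the retained edge $g$ being pinned in the region cut off by $\overline{x_ix_j}$ or $\overline{x_{i-1}x_{j+1}}$ forces a self-crossing of $C(x)$, and you never carry that step out (your enumeration also overlooks that $g$ may simply terminate at one of its own endpoints, i.e.\ at another city lying inside the two-triangle region, without meeting either chord). As it stands, the proposal is not a proof.

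More importantly, the missing step cannot be supplied, because case (b) can actually occur: the improvement hypothesis constrains only the four points $x_{i-1},x_i,x_j,x_{j+1}$, while the chords $\overline{x_ix_j}$, $\overline{x_{i-1}x_{j+1}}$ and the two inserted edges are not edges of $C(x)$, so the rest of the tour may freely enter and leave the cut-off regions through them and may place vertices inside them. Concretely, take $v_1=(0,0)$, $v_2=(1,10)$, $v_3=(6,0)$, $v_4=(5,10)$, $v_5=(8,-2)$, $v_6=(1,-2)$, $v_7=(2.5,1)$ (no three collinear) and the tour $x=(v_1,\ldots,v_7)$; checking the $14$ non-adjacent edge pairs confirms $C(x)$ is intersection-free. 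The inversion $y=\inv{2}{3}[x]$ (so $x_{i-1}=v_1$, $x_i=v_2$, $x_j=v_3$, $x_{j+1}=v_4$) deletes $\{v_1,v_2\}$ and $\{v_3,v_4\}$, each of length $\sqrt{101}$, and inserts $\{v_1,v_3\}$ and $\{v_2,v_4\}$ of lengths $6$ and $4$, so $f(y)=f(x)-(2\sqrt{101}-10)<f(x)$. The inserted edges do not cross each other (consistent with your case (a)), but the inserted edge $\{v_1,v_3\}$ crosses the retained edge $\{v_6,v_7\}$ at $(2,0)$, so $C(y)$ is \emph{not} intersection-free. In your scheme, $g=\{v_6,v_7\}$ crosses $e_1=\{v_1,v_3\}$ and simply ends at $v_7$, which lies inside the triangle $v_1v_2v_3$; no contradiction is available, since $C(x)$ reaches $v_7$ by crossing the segment $\overline{v_1v_3}$ (not a tour edge) and leaves it to the triangle vertex $v_1$. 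So your instinct about where the real difficulty sits was exactly right, but the claim you would need there is false; note this also means the paper's own one-sentence reduction (``the pair of edges introduced by the inversion must intersect'') is unjustified, and the lemma as stated, together with its use in Theorem~\ref{thm:rls-local-optimum}, requires an additional hypothesis or a different argument.
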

\begin{proof}
  Suppose for contradiction that $y = \inv{i}{j}[x]$ with $f(y) < f(x)$,
  but $C(y)$ is not intersection-free. Since $C(x)$ was
  intersection-free, it follows that the pair of edges introduced by
  the inversion must intersect. By the quadrangle inequality, the
  total length of these edges must be greater than the edges they
  replaced, contradicting that $f(y)$ is strictly less than $f(x)$.
\end{proof}

\begin{theorem}
  \label{thm:rls-local-optimum}
  Suppose $V$ is angle bounded by $\epsilon$ and that $|V \setminus
  \hull{V}| = k$. Then the expected time until RLS finds a local
  optimum in the 2-opt neighborhood is $O{\left(n^3\fn(\epsilon) +
      n^{k+2}k!\right)}$ where $\fn$ is as defined in 
  Definition~\ref{def:fn}.
\end{theorem}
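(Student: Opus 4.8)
The plan is to bound $\E(T_{loc})$ by separating the run of RLS into the iterations whose current solution still contains intersections and those whose current solution is already intersection-free but not yet 2-opt locally optimal. For every $t<T_{loc}$ the solution $x^{(t)}$ is not a 2-opt local optimum, hence it is either in $S_{\alpha}$ or it is an intersection-free tour possessing at least one strictly improving inversion. Writing $T_{\alpha}=\#\{t<T_{loc}:x^{(t)}\in S_{\alpha}\}$ and $T_{\beta}=\#\{t<T_{loc}: C(x^{(t)})\text{ intersection-free}\}$, we have $T_{loc}=1+T_{\alpha}+T_{\beta}$, so by linearity of expectation it suffices to bound $\E(T_{\alpha})$ and $\E(T_{\beta})$ separately.

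For $\E(T_{\alpha})$ I would simply reuse the argument of Theorem~\ref{thm:convex-position-rls}: for any $x\in S_{\alpha}$, Lemma~\ref{lem:intersection} supplies an inversion $\inv{i}{j}$ that deletes a crossing pair of edges, Lemma~\ref{lem:improvement} shows that this inversion decreases the fitness by more than $c=2d_{min}(1-\cos(\epsilon))/\cos(\epsilon)$, and RLS selects precisely this inversion (and accepts it, since the fitness strictly drops) with probability $2/(n(n-1))$. Since $T_{\alpha}\le\sum_{t\ge1}\alpha(x^{(t)})$, Lemma~\ref{lem:alpha} with $p=2/(n(n-1))$ gives $\E(T_{\alpha})\le \tfrac{n(n-1)}{2}\cdot n(d_{max}-d_{min})/c = \tfrac{n(n-1)}{2}\cdot\tfrac{n}{2}\fn(\epsilon)=O(n^{3}\fn(\epsilon))$; the fact that $S_{\beta}$ is now nonempty plays no role here.

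The remaining step, bounding $\E(T_{\beta})$, is where the parameter $k$ enters. Call an iteration $t$ a \emph{$\beta$-improvement} if $C(x^{(t)})$ is intersection-free and $f(x^{(t+1)})<f(x^{(t)})$; by Lemma~\ref{lem:intersection-free-neighbor} its successor $x^{(t+1)}$ is again intersection-free. The fitness values taken at the successors of $\beta$-improvements are strictly decreasing, and each is the fitness of some intersection-free tour, so by Lemma~\ref{lem:number-of-intersection-free-tours} there can be at most $\binom{n}{k}k!\le n^{k}k!$ such iterations along any trajectory (and once all intersection-free fitness levels have been realized, the current tour has minimum intersection-free fitness and is therefore 2-opt locally optimal, so $T_{loc}$ has been reached). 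On the other hand, every iteration counted by $T_{\beta}$ starts from an intersection-free tour that is not a 2-opt local optimum, hence admits a strictly improving inversion which RLS performs — producing a $\beta$-improvement — with probability at least $2/(n(n-1))$. Treating the iterations counted by $T_{\beta}$ as a sequence of (dependent) Bernoulli trials, each succeeding with probability at least $2/(n(n-1))$ and with at most $n^{k}k!$ successes in total, a standard geometric/negative-binomial estimate yields $\E(T_{\beta})\le n^{k}k!\cdot n(n-1)/2=O(n^{k+2}k!)$. Adding the two bounds proves the theorem.

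The main obstacle is the $\E(T_{\beta})$ estimate: one has to argue carefully that the \emph{random} number of improving transitions among intersection-free tours is capped by the combinatorial count of Lemma~\ref{lem:number-of-intersection-free-tours}, and then combine this cap with the per-iteration success probability through a stopped-Bernoulli argument. Some care is also needed because RLS accepts fitness-neutral moves, so one cannot assume the search stays inside the intersection-free region once it enters it; the argument above sidesteps this by counting only the intersection-free iterations and only the strictly improving transitions among them (here Proposition~\ref{prp:intersection} / the quadrangle inequality is what forbids a strictly improving move from introducing a crossing). By contrast, the $\E(T_{\alpha})$ part is essentially identical to the convex-position case already handled in Theorem~\ref{thm:convex-position-rls}.
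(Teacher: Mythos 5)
Your proposal is correct and takes essentially the same route as the paper: the $S_{\alpha}$ phase is handled via Lemma~\ref{lem:alpha} with $p=2/(n(n-1))$ and the improvement guarantee of Lemmas~\ref{lem:intersection} and~\ref{lem:improvement}, while the intersection-free phase is bounded by the count from Lemma~\ref{lem:number-of-intersection-free-tours} multiplied by an $O(n^2)$ waiting time per strict improvement, giving $O(n^3\fn(\epsilon)+n^{k+2}k!)$. Your bookkeeping—counting intersection-free iterations and strictly improving transitions rather than asserting the trajectory never leaves the intersection-free region—is a slightly more careful rendering of the paper's argument (which invokes Lemma~\ref{lem:intersection-free-neighbor} even though that lemma only addresses strictly improving moves while RLS also accepts neutral ones), but the underlying decomposition and estimates are the same.
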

\begin{proof}
  After RLS finds an intersection-free tour, by
  Lemma~\ref{lem:intersection-free-neighbor}, all subsequent tours
  will also be intersection free.  The total number of distinct
  intersection-free tours hence serves as a bound on the number of
  possible improving moves after the first intersection-free tour is
  encountered. By Lemma~\ref{lem:number-of-intersection-free-tours},
  this bound is $\binom{n}{k} k! = O(n^{k}k!)$.

  As long as RLS has not yet found a local optimum, by definition
  there exists a strictly improving inversion; the expected waiting
  time to find such an inversion is bounded by $O(n^2)$. Thus, after
  the first time an intersection-free tour is encountered, the
  expected time until a local optimum is found is bounded by
  $O(n^{k+2}k!)$.

  Finally, for any $x^{(t)} \in S_{\alpha}$, by
  Lemma~\ref{lem:improvement} there is a pair $(i,j)$ such that
  \[
  f(\inv{i}{j}[x^{(t)}]) < f(x^{(t)}) - 2 d_{min} \left(\frac{1 -
      \cos(\epsilon)}{\cos(\epsilon)}\right).
  \]
  The probability that RLS selects this inversion is
  $2/(n(n-1))$. Substituting these values into Lemma~\ref{lem:alpha}
  completes the proof.
\end{proof}

If the points in $V$ are quantized in an $m \times m$ grid, we can
appeal directly to Lemmas~\ref{lem:grid-angle-bound}
and~\ref{lem:quantized} to substitute the corresponding angle bounds
into the bound obtained in Theorem~\ref{thm:rls-local-optimum}. This
results in the following corollary.
 
\begin{corollary}[to Theorem~\ref{thm:rls-local-optimum}]
  \label{cor:rls-local-optimum}
  Suppose that $V$ is quantized in an $m \times m$ grid and that $|V
  \setminus \hull{V}| = k$. Then the expected time until RLS finds a
  local optimum is $O{\left(n^3m^5\right)} +
  O{\left(n^{k+2}k!\right)}$.
\end{corollary}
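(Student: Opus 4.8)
The plan is to derive this corollary from Theorem~\ref{thm:rls-local-optimum} in exactly the same way Corollary~\ref{cor:convex-position-rls} was derived from Theorem~\ref{thm:convex-position-rls}: namely, by evaluating the quantity $\fn(\epsilon)$ explicitly for the grid instance and substituting it into the general bound $O(n^3\fn(\epsilon) + n^{k+2}k!)$. Recall from Definition~\ref{def:fn} that
\[
\fn(\epsilon) = \left(\frac{d_{max}}{d_{min}} - 1\right)\left(\frac{\cos(\epsilon)}{1-\cos(\epsilon)}\right),
\]
so the entire task reduces to bounding each of the two factors when $V$ lies on an $m \times m$ unit grid with no three points collinear.

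First I would pin down the angle bound. Since $V$ is quantized with no three collinear points, Lemma~\ref{lem:grid-angle-bound} applies and gives that $V$ is angle-bounded by $\epsilon = \arctan\left(1/(2(m-2)^2)\right)$; Lemma~\ref{lem:quantized} then converts the awkward trigonometric factor into the polynomial estimate $\cos(\epsilon)/(1-\cos(\epsilon)) = O(m^4)$. Next I would bound the distance ratio directly from the geometry of the grid: the smallest possible separation between two distinct lattice points is $d_{min} = 1$, and the largest is the grid diagonal $d_{max} = (m-1)\sqrt{2}$, so $d_{max}/d_{min} - 1 = O(m)$. Multiplying the two estimates yields $\fn(\epsilon) = O(m^5)$. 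Substituting this into Theorem~\ref{thm:rls-local-optimum} produces the claimed expected time $O(n^3 m^5) + O(n^{k+2}k!)$.

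This is essentially a substitution corollary, so there is no deep obstacle; the only point requiring a modicum of care is the same one flagged in the discussion preceding Lemma~\ref{lem:grid-angle-bound}, namely that the quantization must be assumed to preserve the no-three-collinear condition, since otherwise the angle bound would degenerate and Lemma~\ref{lem:grid-angle-bound} could not be invoked. Under that standing assumption the estimates above go through verbatim and the corollary follows.
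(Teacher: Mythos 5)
Your proposal is correct and follows exactly the paper's route: the corollary is obtained by invoking Lemmas~\ref{lem:grid-angle-bound} and~\ref{lem:quantized} together with $d_{min}=1$, $d_{max}=(m-1)\sqrt{2}$ to get $\fn(\epsilon)=O(m^5)$, and substituting this into the bound of Theorem~\ref{thm:rls-local-optimum}, just as in Corollary~\ref{cor:convex-position-rls}. No gaps.
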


\subsection{The $\boldsymbol{(\mu+\lambda)}$~EA using 2-opt mutation}

The $(\mu+\lambda)$~EA does not suffer from convergence to local
optima as does RLS since it uses a Poisson mutation strategy and thus
has a non-zero probability of generating any tour (this follows from
the connectedness of the inversion adjacency, see, e.g.,
\cite{Hagita2002diameters}).

We now use the structural analysis in Section~\ref{sec:struct-prop} to
show that when there are few inner points, intersection-free tours are
somehow ``close'' to an optimal solution in the sense that relatively
small perturbations by the EA suffice to solve the problem.  Again,
recall that we assume $\mu$ and $\lambda$ are polynomials in both $n$
and $k$.

\begin{theorem}
   \label{thm:ea-2opt}
   Let $V$ be a set of points angle-bounded by $\epsilon$ such that
   $|V \setminus \hull{V}| = k$.  The expected time for the
   $(\mu+\lambda)$~EA using 2-opt mutation to solve the TSP on $V$ is
   bounded above by $O\left((\mu/\lambda) \cdot n^3\fn(\epsilon) +
     n\fn(\epsilon) + (\mu/\lambda) \cdot n^{4k}(2k-1)!\right)$.
 \end{theorem}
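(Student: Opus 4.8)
The plan is to mirror the structure of the proof of Theorem~\ref{thm:convex-position-mulambda}, but now account for the non-empty $S_{\beta}$ partition. First I would write the optimization time as $T = \sum_{t=1}^\infty \bigl(\alpha(x^{(t)}) + \beta(x^{(t)})\bigr)$ as in Equation~(\ref{eq:alphabeta}) and bound the expectation of each summand separately using Lemmas~\ref{lem:alpha} and~\ref{lem:beta}. The term $\E\bigl(\sum_t \alpha(x^{(t)})\bigr)$ is handled exactly as in Theorem~\ref{thm:convex-position-mulambda}: for any $x^{(t)} \in S_{\alpha}$, part~(1) of Lemma~\ref{lem:2-opt-mutation} guarantees that a single selected parent yields, via one inversion, an offspring improving the fitness by at least $c = 2 d_{min}(1-\cos(\epsilon))/\cos(\epsilon)$ with probability at least $(\e n(n-1)/2)^{-1}$, so at least one of the $\lambda$ offspring does so with probability $p \geq 1 - (1 - 1/(\mu\e n(n-1)/2))^{\lambda}$. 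The same case distinction on whether $\lambda \geq \mu\e n(n-1)/2$ then gives $\E\bigl(\sum_t \alpha(x^{(t)})\bigr) = O\bigl(\max\{(\mu/\lambda) n^3\fn(\epsilon),\, n\fn(\epsilon)\}\bigr)$, which is $O\bigl((\mu/\lambda) n^3\fn(\epsilon) + n\fn(\epsilon)\bigr)$.

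For the new term, I would bound $\E\bigl(\sum_t \beta(x^{(t)})\bigr)$ using Lemma~\ref{lem:beta}. For any $x^{(t)} \in S_{\beta}$, the tour $C(x^{(t)})$ is intersection-free and non-optimal, so part~(2) of Lemma~\ref{lem:2-opt-mutation} (whose hypothesis that $V$ be in convex position must be re-examined — see below) says that a single selected parent is mutated directly into an optimal solution with probability at least $(\e n^{4k}(2k-1)!)^{-1}$. Hence the probability that at least one of the $\lambda$ offspring is optimal is
\[
q \;\geq\; 1 - \left(1 - \frac{1}{\mu\e n^{4k}(2k-1)!}\right)^{\lambda}.
\]
Again I would split into the two cases $\lambda \geq \mu\e n^{4k}(2k-1)!$ and $\lambda < \mu\e n^{4k}(2k-1)!$. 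In the first case $q \geq 1 - \e^{-1}$, giving a constant, so Lemma~\ref{lem:beta} yields an $O(1)$ contribution. In the second case, using $\e^{-x} \leq 1 - x/2$ for $0 \leq x \leq 1$, one gets $q \geq \lambda/(\mu\e n^{4k}(2k-1)!) \cdot \tfrac12$ up to constants, so Lemma~\ref{lem:beta} contributes $q^{-1} = O\bigl((\mu/\lambda) n^{4k}(2k-1)!\bigr)$. Adding the two terms and absorbing the $O(1)$ gives the claimed bound.

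The main obstacle is the hypothesis of part~(2) of Lemma~\ref{lem:2-opt-mutation}, which as stated assumes $V$ is in convex position, whereas here $\hull{V} \neq V$. However, inspecting that proof, convex position is never actually used for part~(2): the only ingredient is Lemma~\ref{lem:inversions-to-solve-intersection-free}, which holds for arbitrary $V$ with $k$ inner points and says at most $2k$ inversions transform any intersection-free tour into an optimal permutation. So I would either invoke part~(2) directly after noting it holds verbatim for general $V$, or, to be safe, re-derive the bound $(\e n^{4k}(2k-1)!)^{-1}$ inline: the probability that Poisson mutation performs exactly $2k$ inversions is $(\e(2k-1)!)^{-1}$, and the probability those $2k$ inversions are the correct sequence is at least $(n(n-1)/2)^{-2k} \geq n^{-4k}$. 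A secondary subtlety is that the two sums $\sum_t \alpha(x^{(t)})$ and $\sum_t \beta(x^{(t)})$ interleave in a single run, but since $\E$ is linear and Lemmas~\ref{lem:alpha} and~\ref{lem:beta} bound each independently of the other's realization, no further argument is needed beyond linearity of expectation.
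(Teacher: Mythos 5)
Your proposal is correct and follows essentially the same route as the paper: the same $\alpha/\beta$ decomposition via Equation~(\ref{eq:alphabeta}), the $S_\alpha$ term handled verbatim as in Theorem~\ref{thm:convex-position-mulambda}, and the $S_\beta$ term via Lemma~\ref{lem:2-opt-mutation}(2) amplified over $\lambda$ offspring and fed into Lemma~\ref{lem:beta}. Your two refinements --- explicitly noting that the convex-position hypothesis of Lemma~\ref{lem:2-opt-mutation}(2) is not actually needed (only Lemma~\ref{lem:inversions-to-solve-intersection-free} is), which the paper applies silently, and doing a case distinction on $\lambda$ for the bound on $q$ where the paper writes a single inequality --- are harmless extra care rather than a different approach.
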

\begin{proof}   
  We argue by analyzing the Markov chain generated by the
  $(\mu+\lambda)$~EA using 2-opt mutation.  Let
  $(x^{(1)},x^{(2)},\ldots)$ denote the sequence of best-so-far states
  visited by the $(\mu+\lambda)$~EA\@. 

  In generation $t$, if $x^{(t)} \in S_{\alpha}$, then the probability
  of generating an offspring that improves the fitness by at least $c
  = 2 d_{min}
  \left(1-\cos(\epsilon)\right)/\left(\cos(\epsilon)\right)$ is
  identical to that in the proof of
  Theorem~\ref{thm:convex-position-mulambda}. Arguing in the same
  manner as in the proof of
  Theorem~\ref{thm:convex-position-mulambda}, we have
  \[
  \sum_{t=1}^\infty \alpha(x^{(t)}) = O((\mu/\lambda)\cdot
  n^3\fn(\epsilon)) + n\fn(\epsilon)).
  \]

  On the other hand, if $x^{(t)} \in S_{\beta}$, if it is selected for
  mutation, then, by Lemma~\ref{lem:2-opt-mutation}, 2-opt mutation
  produces the optimal solution with probability at least $(\e n^{4k}
  (2k-1)!)^{-1}$. Hence, the overall probability of generating an
  optimal permutation when $x^{(t)}$ is the (intersection-free)
  population-best permutation is at least
  \[
  q \geq 1 - \left(1 - \frac{1}{\mu \e n^{4k} (2k-1)!
    }\right)^{\lambda}
  \geq \frac{\lambda}{2 \mu \e n^{4k} (2k-1)!}.
  \]
  Since this is the probability that the Markov chain transits from a
  state $x^{(t)} \in S_{\beta}$ to the optimal state, substituting the
  value for $q$ into the claim of Lemma~\ref{lem:beta}, we have
  \[
  \E\left(\sum_{t=1}^{\infty} \beta(x^{(t)})\right) = O( (\mu/\lambda) \cdot n^{4k} (2k-1)!).
  \]
  The bound on $\E(T)$ then follows from Equation~\ref{eq:alphabeta}
  and linearity of expectation.
\end{proof}

Again, from Lemmas~\ref{lem:grid-angle-bound} and~\ref{lem:quantized}
we have the following corollary.

\begin{corollary}[to Theorem~\ref{thm:ea-2opt}]
  \label{cor:ea-2opt}
  Let $V$ be a set of points quantized on an $m \times m$ grid such
  that $|V \setminus \hull{V}| = k$.  The expected time for the
  $(\mu+\lambda)$~EA using 2-opt mutation to solve the TSP on $V$ is
  $O( (\mu/\lambda) \cdot n^3 m^5 + n m^5 + (\mu/\lambda) \cdot n^{4k}(2k-1)!)$.
\end{corollary}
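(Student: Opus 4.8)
The plan is to reduce the corollary directly to Theorem~\ref{thm:ea-2opt}: the only work to do is to show that, for a point set quantized on an $m \times m$ unit grid, the function $\fn(\epsilon)$ of Definition~\ref{def:fn} satisfies $\fn(\epsilon) = O(m^5)$. Once that is in hand, substituting into the bound of Theorem~\ref{thm:ea-2opt} yields the claimed expression with no further effort.

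First I would pin down the two distance extremes that enter $\fn$. Because the points lie on a unit grid, the minimum pairwise distance is $d_{min} \geq 1$ (realized, e.g., by two horizontally adjacent grid points), and the maximum pairwise distance is $d_{max} \leq (m-1)\sqrt{2}$, the length of the grid diagonal. Hence $d_{max}/d_{min} - 1 \leq (m-1)\sqrt 2 - 1 = O(m)$. The "no three collinear" hypothesis is needed only so that a strictly positive angle bound exists and does not affect this estimate.

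Next I would invoke Lemma~\ref{lem:grid-angle-bound} to get that $V$ is angle-bounded by $\epsilon = \arctan\!\left(1/(2(m-2)^2)\right)$, and then Lemma~\ref{lem:quantized}, which states precisely that for such $\epsilon$ we have $\cos(\epsilon)/(1-\cos(\epsilon)) = O(m^4)$. Multiplying the two bounds gives $\fn(\epsilon) = \left(d_{max}/d_{min} - 1\right)\left(\cos(\epsilon)/(1-\cos(\epsilon))\right) = O(m)\cdot O(m^4) = O(m^5)$.

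Finally, I would substitute $\fn(\epsilon) = O(m^5)$ into the bound $O\!\left((\mu/\lambda)\cdot n^3\fn(\epsilon) + n\fn(\epsilon) + (\mu/\lambda)\cdot n^{4k}(2k-1)!\right)$ supplied by Theorem~\ref{thm:ea-2opt}, obtaining $O\!\left((\mu/\lambda)\cdot n^3 m^5 + n m^5 + (\mu/\lambda)\cdot n^{4k}(2k-1)!\right)$, which is exactly the statement. There is essentially no obstacle here; the only point warranting a moment of care is keeping the $d_{max}/d_{min}$ factor and the $\cos$-ratio factor separate, so that their product is correctly accounted for as $O(m^5)$ and not, say, $O(m^4)$. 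Everything else is a direct appeal to the earlier results.
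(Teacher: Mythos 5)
Your proposal is correct and matches the paper's (implicit) proof exactly: the paper likewise obtains $d_{min}=1$, $d_{max}=(m-1)\sqrt{2}$, invokes Lemma~\ref{lem:grid-angle-bound} and Lemma~\ref{lem:quantized} to get $\fn(\epsilon)=O(m)\cdot O(m^4)=O(m^5)$, and substitutes into Theorem~\ref{thm:ea-2opt} (this is spelled out for Corollary~\ref{cor:convex-position-rls} and simply cited here). No gaps.
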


\subsection{Mixed mutation strategies}
\label{sec:mixed}

Our analysis so far has revealed important insights into the problem
structure of the Euclidean TSP with $k$ inner points and the inversion
operator. We now take advantage of these insights to design a new
evolutionary algorithm with the aim of explicitly reducing the bound
on the expected number of iterations until a permutation corresponding
to an optimal TSP tour is found. 

The Markov chain analysis relies on the inversion operator to
construct an intersection-free tour, but then relies on the inversion
operator to simulate the jump operator in order to transform an
intersection-free tour into an optimal solution. We now introduce a
mutation technique called \FuncSty{mixed-mutation} (outlined in
Function~\ref{fun:mixed-mutation}) that performs both inversion and
jump operations, each with constant probability. This allows for a
bound which is faster by a factor of $O(n^{2k} (2k-1)!/(k-1)!)$.

Similar to Lemma~\ref{lem:2-opt-mutation}, we have the following
result for mixed mutation.
\begin{lemma}
  \label{lem:mixed-mutation}
  Let $V$ be a set of planar points in convex position angle-bounded
  by $\epsilon$. Then,
  \begin{enumerate}
  \item[(1)] For any $x \in S_{\alpha}$, the probability that 
    mixed mutation creates an offspring $y$ with $f(y) < f(x) - 2 d_{min}
    \left(1-\cos(\epsilon)\right)/\left(\cos(\epsilon)\right)$
    is at least $(\e n(n-1))^{-1}$.
  \item[(2)] For any $x \in S_{\beta}$, the probability that mixed mutation
    creates an optimal solution is at least $(2\e n^{2k} (k-1)!)^{-1}$
  \end{enumerate}
\end{lemma}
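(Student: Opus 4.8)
The plan is to mirror the proof of Lemma~\ref{lem:2-opt-mutation}, adjusting for the fact that \FuncSty{mixed-mutation} chooses between an inversion step and a jump step, each with constant probability, before drawing a Poisson-distributed number of elementary moves. Although the body of \FuncSty{mixed-mutation} is not reproduced in the excerpt, the description in Section~\ref{sec:mixed} tells us it ``performs both inversion and jump operations, each with constant probability,'' so I will assume that on each invocation it flips a fair coin (or uses some constant $\Theta(1)$ split) to decide whether this mutation consists of a sequence of random inversions or a sequence of random jumps, where the length of the sequence is again governed by a Poisson$(1)$ variable $s$ as in Function~\ref{fun:2-opt-mutation}. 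The two parts of the lemma then correspond to the two branches.

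For part~(1), I would argue exactly as in Lemma~\ref{lem:2-opt-mutation}(1): since $x \in S_\alpha$, Lemma~\ref{lem:intersection} supplies a specific inversion $\inv{i}{j}$ that removes an intersecting pair, and Lemma~\ref{lem:improvement} certifies that the resulting offspring $y$ beats $x$ by more than $2 d_{min}(1-\cos(\epsilon))/\cos(\epsilon)$. The probability that mixed mutation realizes this particular $y$ is the product of three independent constant/polynomial factors: the probability $\Theta(1)$ that the inversion branch is selected, the probability $\e^{-1}$ that the Poisson draw gives $s=0$ (exactly one inversion), and the probability $\ge 2/(n(n-1))$ that the pair $(i,j)$ is chosen. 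Multiplying these gives a bound of the claimed form $\Omega\bigl((\e n(n-1))^{-1}\bigr)$; the constant in the coin-flip split is absorbed so that the stated bound $(\e n(n-1))^{-1}$ holds (the branch probability should be at least $1/2$, or the statement's constant tacitly accounts for it).

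For part~(2), the key improvement over Lemma~\ref{lem:2-opt-mutation}(2) is that we now use the jump branch and invoke Lemma~\ref{lem:jumps-to-solve-intersection-free} rather than Lemma~\ref{lem:inversions-to-solve-intersection-free}: since $C(x)$ is intersection-free, there is a sequence of at most $k$ jump operations transforming $x$ into an optimal permutation, as opposed to $2k$ inversions. I would then estimate the probability of realizing this sequence as the product of (i) the constant probability $\Theta(1)$ of selecting the jump branch, (ii) the probability $\Pr\{s = k-1\} = (\e(k-1)!)^{-1}$ from the Poisson density that exactly $k$ jumps are performed, and (iii) the probability that all $k$ jumps are the correct ones, which is at least $((n-1)^2)^{-k} \ge n^{-2k}$ since there are $(n-1)^2$ distinct jump operations. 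The product is $\Omega\bigl((\e n^{2k}(k-1)!)^{-1}\bigr)$, and again folding in the constant branch-selection factor yields the stated $(2\e n^{2k}(k-1)!)^{-1}$.

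The main obstacle — really a bookkeeping matter rather than a deep one — is pinning down the exact form of \FuncSty{mixed-mutation} so that the constants line up: one must be careful that the jump count really is bounded by $k$ (not $k+1$ or $2k$), that ``$s$ drawn from Poisson$(1)$ then $s+1$ moves'' gives the needed event $\Pr\{s=k-1\}$, and that the constant from the inversion-vs-jump split is exactly what turns the bare $(\e n^{2k}(k-1)!)^{-1}$ into $(2\e n^{2k}(k-1)!)^{-1}$. If the operator instead mixes inversions and jumps within a single mutation (rather than choosing one branch), the counting of ``correct moves'' becomes slightly more delicate, but one can still lower-bound by the event that every move happens to be the intended jump, giving the same asymptotic bound. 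A secondary subtlety is that a correct sequence of $k$ jumps may pass through permutations whose tours are not intersection-free; this is harmless, since the lemma only requires that the final permutation be optimal, exactly as in the proof of Lemma~\ref{lem:2-opt-mutation}(2).
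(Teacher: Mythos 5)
Your proposal is correct and follows essentially the same route as the paper: part (1) reuses the argument of Lemma~\ref{lem:2-opt-mutation}(1) with an extra factor $1/2$ for selecting the inversion branch, and part (2) replaces Lemma~\ref{lem:inversions-to-solve-intersection-free} by Lemma~\ref{lem:jumps-to-solve-intersection-free} (at most $k$ jumps), combining the $1/2$ branch probability, the Poisson probability $(\e(k-1)!)^{-1}$, and the $n^{-2k}$ bound on choosing the correct jumps. Your assumed form of \FuncSty{mixed-mutation} (fair coin between an inversion-only and a jump-only sequence of $s+1$ moves, $s\sim$ Poisson$(1)$) is exactly the operator defined in Function~\ref{fun:mixed-mutation}, so the bookkeeping caveats you raise do not arise.
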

\begin{proof}
  For (1), the probability that mixed mutation selects inversions is
  $1/2$ and the rest of the claim follows from the argument of
  Lemma~\ref{lem:2-opt-mutation}.

  For (2), suppose $x \in S_{\beta}$. Thus, $C(x)$ is
  intersection-free, and it follows from
  Lemma~\ref{lem:jumps-to-solve-intersection-free} that there are at
  most $k$ jump operations that transform $x$ into an optimal solution
  and the remainder of the proof is identical to that of
  Lemma~\ref{lem:2-opt-mutation}. The probability that mixed mutation
  selects jump operations contributes the leading factor of $2^{-1}$.
\end{proof}

\begin{function}
\SetKwInOut{Input}{input}
\SetKwInOut{Output}{output}
\SetProcNameSty{texttt}
\Input{A permutation $x$}
\Output{A permutation $y$}
\BlankLine
$y \gets x$\;
draw $r$ from a uniform distribution on the interval $[0,1]$\; 
draw $s$ from a Poisson distribution with parameter $1$\;
\If{$r < 1/2$}{perform $s+1$ random inversion operations on $y$\;}
\Else{perform $s+1$ random jump operations on $y$\;}
\Return $y$\;
\caption{mixed-mutation($x$)} \label{fun:mixed-mutation}
\end{function}

\begin{theorem}
  \label{thm:ea-mixed}
  Let $V$ be a set of points angle-bounded by $\epsilon$ such that $|V
  \setminus \hull{V}| = k$.  The expected time for the
  $(\mu+\lambda)$~EA using mixed mutation to solve the TSP on $V$ is
  bounded above by $O{\left( (\mu/\lambda) \cdot n^3\fn(\epsilon) +
      n\fn(\epsilon) + (\mu/\lambda) \cdot n^{2k}(k-1)!\right)}$.
\end{theorem}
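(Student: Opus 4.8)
The plan is to run the Markov-chain argument of Theorem~\ref{thm:ea-2opt} essentially verbatim, replacing every appeal to Lemma~\ref{lem:2-opt-mutation} by the corresponding part of Lemma~\ref{lem:mixed-mutation}. As in that proof, I would let $(x^{(1)}, x^{(2)}, \ldots)$ be the sequence of best-so-far permutations generated by the $(\mu+\lambda)$~EA using mixed mutation, write $T = \sum_{t=1}^\infty\bigl(\alpha(x^{(t)}) + \beta(x^{(t)})\bigr)$ via Equation~(\ref{eq:alphabeta}), bound the $\alpha$-sum with Lemma~\ref{lem:alpha}, bound the $\beta$-sum with Lemma~\ref{lem:beta}, and add the two estimates by linearity of expectation.

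For the $\alpha$-sum, whenever the population-best permutation $x^{(t)}$ lies in $S_\alpha$ and is selected for mutation, part~(1) of Lemma~\ref{lem:mixed-mutation} says that a single invocation of mixed mutation produces an offspring improving the fitness by at least $c = 2 d_{min}(1-\cos(\epsilon))/\cos(\epsilon)$ with probability at least $(\e n(n-1))^{-1}$, which is only a constant factor below the $2/(\e n(n-1))$ bound used in Theorems~\ref{thm:convex-position-mulambda} and~\ref{thm:ea-2opt}. Hence the probability that some offspring among the $\lambda$ produced in generation $t$ realizes this improvement is $p \ge 1 - \bigl(1 - (\mu\e n(n-1))^{-1}\bigr)^{\lambda}$, and the same case distinction on $\lambda$ (large versus small relative to $\mu\e n(n-1)$) as in the proof of Theorem~\ref{thm:convex-position-mulambda}, together with Lemma~\ref{lem:alpha}, yields
\[
\E\!\left(\sum_{t=1}^\infty \alpha(x^{(t)})\right) = O\bigl((\mu/\lambda)\cdot n^3\fn(\epsilon) + n\fn(\epsilon)\bigr).
\]
Note that this step uses only Lemma~\ref{lem:intersection} and Lemma~\ref{lem:improvement}, which hold for any angle-bounded $V$, so nothing here depends on convex position.

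For the $\beta$-sum, when $x^{(t)}$ is an intersection-free non-optimal tour (so $x^{(t)} \in S_\beta$) and is selected for mutation, part~(2) of Lemma~\ref{lem:mixed-mutation} guarantees that one mixed-mutation step produces an optimal permutation with probability at least $(2\e n^{2k}(k-1)!)^{-1}$; the gain over 2-opt mutation is that the $k$ jumps supplied by Lemma~\ref{lem:jumps-to-solve-intersection-free} can be applied directly rather than simulated by $2k$ inversions. Therefore the probability that at least one of the $\lambda$ offspring is optimal satisfies
\[
q \ge 1 - \left(1 - \frac{1}{2\mu\e n^{2k}(k-1)!}\right)^{\lambda} \ge \frac{\lambda}{4\mu\e n^{2k}(k-1)!},
\]
and feeding this $q$ into Lemma~\ref{lem:beta} gives $\E\bigl(\sum_{t=1}^\infty \beta(x^{(t)})\bigr) = O\bigl((\mu/\lambda)\cdot n^{2k}(k-1)!\bigr)$. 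Summing the two contributions produces the claimed bound on $\E(T)$.

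I do not expect a genuine obstacle: Lemma~\ref{lem:mixed-mutation} has already absorbed all of the combinatorial and geometric work, so what remains is the bookkeeping of Theorem~\ref{thm:ea-2opt}. The one point that warrants an explicit word is that the $S_\alpha$ analysis, first stated for points in convex position, carries over unchanged to the case $\hull{V} \neq V$; this is immediate because it uses only the existence of an intersection-removing inversion (Lemma~\ref{lem:intersection}) and the lower bound on its fitness gain (Lemma~\ref{lem:improvement}), neither of which requires convex position.
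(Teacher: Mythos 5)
Your proposal is correct and follows essentially the same route as the paper, whose own proof of Theorem~\ref{thm:ea-mixed} simply reuses the argument of Theorem~\ref{thm:ea-2opt} with the probabilities of Lemma~\ref{lem:mixed-mutation} substituted into Lemmas~\ref{lem:alpha} and~\ref{lem:beta}. Your extra remark that the $S_\alpha$ analysis needs only Lemmas~\ref{lem:intersection} and~\ref{lem:improvement}, and hence does not depend on convex position, is a welcome clarification of a point the paper leaves implicit.
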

\begin{proof}
  The proof is identical to the proof of Theorem~\ref{thm:ea-2opt},
  except we substitute the probabilities from
  Lemma~\ref{lem:mixed-mutation} into Lemmas~\ref{lem:alpha}
  and~\ref{lem:beta}.
\end{proof}

As before, from Lemmas~\ref{lem:grid-angle-bound}
and~\ref{lem:quantized} we have the following.

\begin{corollary}[to Theorem~\ref{thm:ea-mixed}]
  \label{cor:ea-mixed}
  Let $V$ be a set of points quantized on an $m \times m$ grid such
  that $|V \setminus \hull{V}| = k$.  The expected time for the
  $(\mu+\lambda)$~EA using mixed mutation to solve the TSP on $V$ is
  $O((\mu/\lambda) \cdot n^3 m^5 + n m^5 +(\mu/\lambda) \cdot n^{2k}(k-1)!)$.
\end{corollary}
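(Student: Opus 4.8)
The plan is to obtain the corollary directly from Theorem~\ref{thm:ea-mixed} by specializing the angle bound $\epsilon$ and the distance ratio to the quantized setting. Theorem~\ref{thm:ea-mixed} already expresses the expected runtime of the $(\mu+\lambda)$~EA using mixed mutation as $O((\mu/\lambda)\cdot n^3\fn(\epsilon) + n\fn(\epsilon) + (\mu/\lambda)\cdot n^{2k}(k-1)!)$, so the only thing left to do is to show that $\fn(\epsilon) = O(m^5)$ when $V$ is a set of points on an $m\times m$ unit grid, no three collinear, and then absorb the substitution into the $O$-notation.

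First I would unpack Definition~\ref{def:fn}, namely $\fn(\epsilon) = (d_{max}/d_{min} - 1)\cdot(\cos(\epsilon)/(1-\cos(\epsilon)))$, and bound each factor separately. For the distance factor: on an $m\times m$ unit grid the minimum distance between two distinct points is $d_{min} = 1$ and the maximum is attained at opposite corners, $d_{max} = (m-1)\sqrt{2}$, so $d_{max}/d_{min} - 1 = (m-1)\sqrt 2 - 1 = O(m)$. For the trigonometric factor: Lemma~\ref{lem:grid-angle-bound} shows that $V$ is angle-bounded by $\epsilon = \arctan(1/(2(m-2)^2))$, and Lemma~\ref{lem:quantized} then gives $\cos(\epsilon)/(1-\cos(\epsilon)) = O(m^4)$. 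Multiplying, $\fn(\epsilon) = O(m)\cdot O(m^4) = O(m^5)$.

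Substituting $\fn(\epsilon) = O(m^5)$ into the bound of Theorem~\ref{thm:ea-mixed} immediately yields $O((\mu/\lambda)\cdot n^3 m^5 + n m^5 + (\mu/\lambda)\cdot n^{2k}(k-1)!)$, which is the claim. The argument is essentially the same as the Corollary to Theorem~\ref{thm:ea-2opt}, with only the last term improved (it now comes from the sharper mixed-mutation estimate of Lemma~\ref{lem:mixed-mutation}(2) rather than Lemma~\ref{lem:2-opt-mutation}(2)), so I do not expect any genuine obstacle. The one point that deserves a sentence of care is that the hypothesis ``no three points collinear'' is truly needed: Lemma~\ref{lem:grid-angle-bound} and hence the finiteness of $\fn(\epsilon)$ rely on every triple of grid points forming a strictly positive angle bounded away from $0$ and $\pi$, which collinear triples would violate.
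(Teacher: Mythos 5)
Your proposal is correct and follows exactly the route the paper intends: the paper dispatches this corollary with "As before, from Lemmas~\ref{lem:grid-angle-bound} and~\ref{lem:quantized}," i.e., substitute $d_{min}=1$, $d_{max}=(m-1)\sqrt{2}$, and $\cos(\epsilon)/(1-\cos(\epsilon)) = O(m^4)$ into Definition~\ref{def:fn} to get $\fn(\epsilon)=O(m^5)$ and plug this into Theorem~\ref{thm:ea-mixed}, just as in the proof of Corollary~\ref{cor:convex-position-rls}. Your remark that the non-collinearity assumption is what keeps the angle bound (and hence $\fn(\epsilon)$) finite is consistent with the paper's standing hypothesis and its discussion of quantization.
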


\section{Conclusion}\label{sec:conclusion}
In this paper, we have studied the runtime complexity of evolutionary
algorithms on the Euclidean TSP\@. We have carried out a parameterized
analysis that studies the dependence of the hardness of a problem
instance on the number of inner points in the instance. Moreover,
we have shown that under reasonable geometric constraints (low angle
bounds), simple evolutionary algorithms solve the convex TSP in
polynomial time. 

In the case that an instance contains $k$ inner points, we have
shown that randomized local search using 2-opt can find local optima
in expected $O(n^3\fn(\epsilon)) + O(n^{2k}k!)$ iterations where
$\fn$ is a function of the angle bound $\epsilon$. For example, when
the instance is embedded in an $m \times m$ grid, $\fn(\epsilon) =
O(m^5)$.

Similarly, for the $(\mu + \lambda)$~EA, we have bounded the expected
number of generations to solve a TSP instance with $k$ inner points as
$O((\mu/\lambda) \cdot n^3 \fn(\epsilon) + n\fn(\epsilon)
+(\mu/\lambda) \cdot n^{4k}(2k-1)!)$. Using the analysis, we have also
introduced a mixed mutation strategy based on both permutation jumps
and 2-opt moves which attains an improved expected runtime bound of
$O((\mu/\lambda) \cdot n^3 \fn(\epsilon) + n\fn(\epsilon) +
(\mu/\lambda) \cdot n^{2k}(k-1)!)$. Hence, with this paper, we have
shown that the $(\mu+\lambda)$~EA is a randomized $\XP$-algorithm for
the inner point parameterization of De{\u{\i}}neko et
al.~\cite{Deineko2006inner}. It remains an open question whether or
not the algorithm (without significant modification) is also a
randomized fpt-algorithm.

\bibliographystyle{plain}
\bibliography{references,frank}

\end{document}